\documentclass{myElsart}

\usepackage{amsmath}
\usepackage{amssymb}
\usepackage{bbm}
\usepackage{booktabs}
\usepackage{theorem}
\usepackage[]{natbib}
\usepackage{tikz}
\usetikzlibrary{arrows}
\usepackage{enumitem}

\usepackage{xr}
\usepackage{xr-hyper}

\newtheorem{theorem}{Theorem}[section]
\newtheorem{lemma}[theorem]{Lemma}
\newtheorem{proposition}[theorem]{Proposition}

\newenvironment{proof}[1][Proof]{\vspace{1em}\begin{trivlist}
\item[\hskip \labelsep {\bfseries #1}]}{\hfill $\Box$\end{trivlist}\vspace{1em}}

\begin{document}

\begin{frontmatter}

\title{High-dimensional networks and mean squared error for possibly misspecified models}
\author{Lourens Waldorp}\ead{waldorp@uva.nl} 
\address{University of Amsterdam, Nieuwe Achtergracht 129-B, 1018 NP, the Netherlands}

\begin{abstract}
To avoid missing important variables and their connections in networks, more and more variables are included in network analysis. Here we show that in a setting with many more parameters than observations (high-dimensional) it is possible to get a conservative (i.e., low false positive rate) estimate of the neighbourhood for each node (which connections are in the network). A neighbourhood is often estimated with a linear model, and this leads to two interesting cases: (i) If the true model is linear, then neighbourhood selection work reasonably well, and (ii) if the true model is nonlinear, then neighbourhood selection requires a penalty for the high dimensions. Here we show the impact of the ridge parameter on the mean squared error, and how this leads to low test variance and hence to neighbourhoods with large numbers of edges. We connect these insights with results from machine learning, where the so-called double descent (when more parameters are included than observations, the mean squared error goes down a second time) has put the traditional view on model selection upside down. Essentially, for adequate neighbourhood selection in models with a large number of parameters, the volume of the model space needs to be included in the penalty. Most neighbourhood selection methods (e.g., Lasso, AIC, BIC) lead to spurious edges (high false positive rate), but we prove that in the high-dimensional setting, minimum description length leads to correct neighbourhood selection or smaller (low false positive rates) in both cases when either the model is correctly or incorrectly assumed linear.  
\end{abstract}
\begin{keyword}
model selection, double descent, bias-variance trade-off, mean squared error, minimum description length
\end{keyword}

\end{frontmatter}
\endNoHyper

\section{Introduction}\noindent
Often we are concerned that the number of observations in a study is insufficient to warrant proper analysis using networks \citep{Epskamp2018c}. In particular, a network contains many possible connections and therefore many parameters to be estimated. For instance, with 20 nodes, there are already 190 parameters for the edges in a network. In the classical statistical framework we would need  about 1000 observations to reliably estimate these parameters.

Recent advances have brought several estimation techniques that allow for the so-called high-dimensional situation, where we have more edges than observations (overparameterisation). Examples are the least absolute shrinkage and selection operator \citep[lasso,][]{Tibshirani:1996} and the ridge estimator \citep{Hoerl:1970,rao90}. The lasso is popular because it simultaneously estimates and selects edges to a node \citep{Waldorp:2024}. However, the lasso does not have a closed form solution and is discontinuous, which makes standard inference problematic \citep[although there are workarounds, see e.g.,][]{Buhlmann:2014b,Dezeure:2015}. In contrast, the ridge estimator does have a closed solution and is continuous. This makes the ridge estimate more amenable to analysis and allows for an analytical approach to determine why generalisation is still possible in the over-parameterised regime. The ridge estimate is constructed by adding independent dimensions to the predictors so that a unique solution can be obtained \citep{Hoerl:1970,rao90}. The solution is biased, but can relatively easily be used for inference \citep{Buhlmann:2013}. The ridge estimate can also be used for neighbourhood selection, i.e., using Akaike and Bayes information criteria and minimum description length to obtain a reliable estimate of the neighbours of each node in a network. Here we focus on reliable model selection for high-dimensional linear models used to estimate the neighbourhood of any node in the network, where the true edge relations are either linear or nonlinear.  

We use the well-known nodewise regression (neighbourhood selection) framework in graphical modelling \citep{Lauritzen96,Meinshausen:2006,Jankova:2017}. In nodewise regression, the procedure to find the nodes that are connected is completely local; each node is in turn used as the dependent variable in a regression on all remaining nodes. For each node, the objective is to identify the correct set of non-zero coefficients (neighbours) from the (large) set of remaining nodes. Figure \ref{fig:true-network} shows three graphs with different number of nodes ($15$, $32$ and $320$) from the perspective of one node; the objective is to identify five non-zero edges in the space of all other zero edges (sparse solution). When all nodes have been the focus of a regression, all neighbourhoods can be `stitched' together to form the entire graph.  For nodewise selection the objective is to select the correct neighbourhood or smaller but not larger (i.e., no false positive edges), because then, in stitching together the entire graph, the number of false positive edges will be high in large graphs. 
\begin{figure}
\begin{tabular}{@{\hspace{-1em}} c @{\hspace{1em}} c @{\hspace{1em}} c}
\textsf{\small 15 nodes}		&\textsf{\small 32 nodes}			&\textsf{\small 320 nodes}\\[0em]
\pgfimage[width=.33\textwidth]{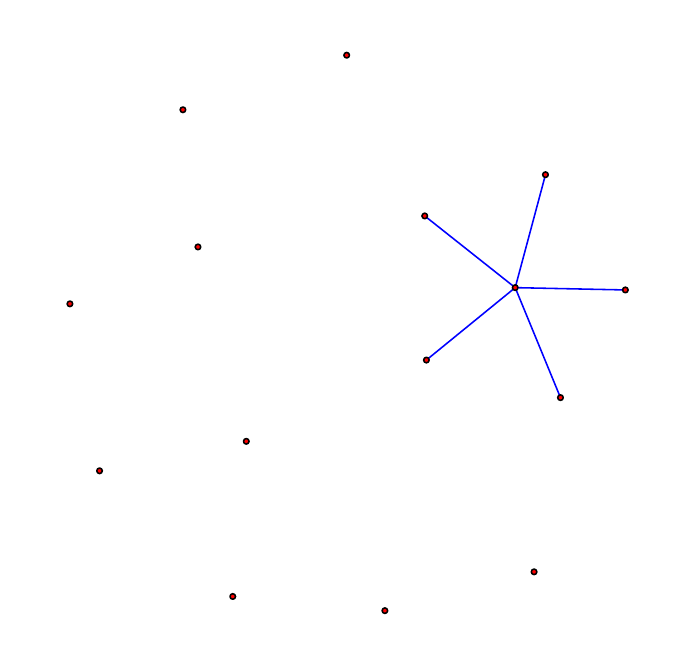}		&\pgfimage[width=.33\textwidth]{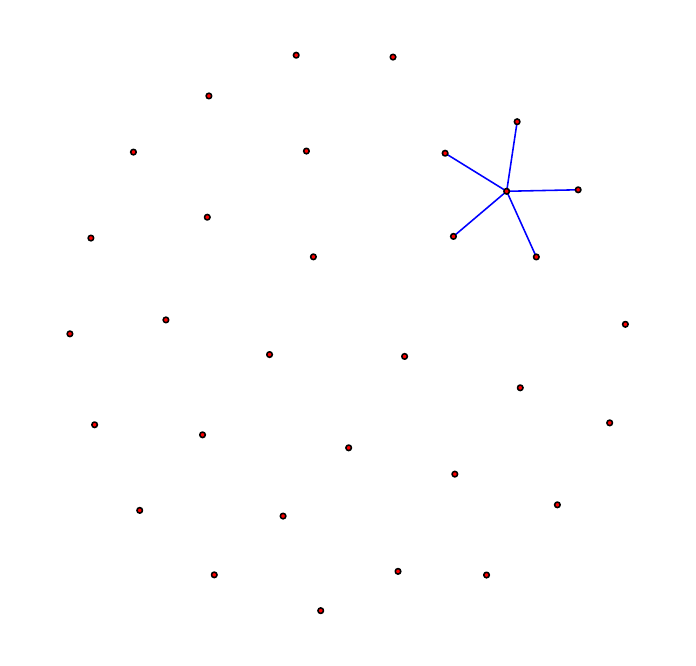}	&\pgfimage[width=.33\textwidth]{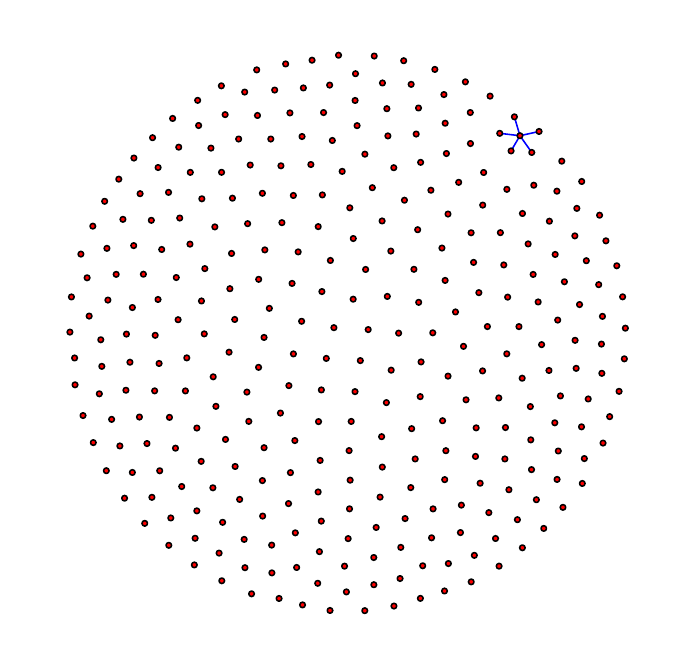}
\end{tabular}
\caption{ Graphs from the perspective of nodewise regression for graph sizes $15$, $32$ and $320$. For any node there are five non-zero coefficients (blue edges), so that each regression needs to identify among the remaining nodes the five non-zero coefficients.  }
\label{fig:true-network}
\end{figure}
%

To achieve this, we use recent work on the impact of estimation in the high-dimensional setting on the mean squared error (MSE) in regression. Both in \citet{Bartlett:2020} and \citet{Hastie:2022} the so-called ''ridgeless'' estimator was used (see Section \ref{sec:gaussian-graphical-model}) to conclude that, surprisingly, the MSE remains well behaved even in high dimensions. This implies that in principle it is possible to perform reliable neighbourhood selection in the high-dimensional setting. \citet{Cheema:2020} investigated the geometry of high-dimensional models relating it to the impact of using a large parameter space on model selection, concluding that when the large parameter space of the overparameterised model is properly taken into account, then model selection should work. \citet{Dwivedi:2020} adapted minimum description length, a model selection procedure, to be computationally efficient and reliable for large datasets. 
Other approaches, related to hypothesis testing, have also been investigated in the high-dimensional setting \citep[e.g.,][]{Meinshausen:2010,Buhlmann:2013,Shah:2023}. However, the selection is based on false positive control by thresholding, which is often resolved by additional assumptions, such as independence for the familywise error rate or false discovery rate \citep[e.g.,][]{DasGupta:2008}.

Our contribution builds on these works and mostly on the work by \citet{Cheema:2020}. Using linear neighbourhood selection, we leverage results about the impact of high-dimensional ridge estimation on MSE, and use this to explain why overfitting (many parameters in linear models) can unexpectedly result in low MSE, causing problems for nodewise selection. Penalties in the Akaike information criterion (AIC), for instance, are inadequate and mostly select too many edges (overfit). But we prove that the penalty of the minimum description length, as given in \citet{Grunwald:2007}, in the high-dimensional setting leads the correct neighbourhood or smaller, in both cases when the true model for edge relation is linear or nonlinear. 


We briefly explain the Gaussian graphical model in Section \ref{sec:gaussian-graphical-model} and discuss the problem of estimation in the high-dimensional setting. We describe that ridge regression leads to a possible choice in obtaining a unique estimate in high dimensions. Then in Section 
\ref{sec:mean-squared-error-bias} we discuss mean squared error (MSE). We show that the MSE remains low unexpectedly after the interpolation point (where the number of parameters equals the number of observations). This MSE behaviour is directly related to the ridge parameter, and we show that the ridge solution is directly related to constraining the parameter space. To achieve this, we discuss the MSE decomposition in Section \ref{sec:mse-decomposition} in terms of squared bias and variance, also in the case of model misspecification. Then in Section \ref{sec:model-selection} we turn to model selection. 
Here we describe minimum description length as an example where the geometry of the model space is taken into account as a penalty, and prove that this will lead to correct neighbourhood selection or smaller. In Section \ref{sec:simulations} we use simulations to confirm the theoretical results of neighbourhood selection.

\section{Estimation in the Gaussian graphical model}\label{sec:gaussian-graphical-model}\noindent
The Gaussian graphical model (GGM) is a Gaussian (normal) multivariate distribution associated with a set of nodes and edges in a network. A node $j$ in the network represents a Gaussian random variable $X_j$, and an edge $(i,j)$ between two nodes $i$ and $j$ in the network represents a conditional dependence relation between the two variables $X_i$ and $X_j$ given all other variables $X_k$ with $k\ne i,j$. 

The main characterisation in a GGM of conditional dependence is the partial correlation (or partial covariance). That is why a GGM is sometimes referred to as a partial correlation network. This characterisation makes sense because whenever a partial correlation is 0, then the two variables are conditionally independent \citep[][Proposition 5.2]{Lauritzen96} and so no edge is present in the network; when a partial correlation is $\ne 0$, an edge is present in the network. So, to construct a network we need to determine the 0 partial correlations. 
Obtaining the partial correlations can be done by estimating the inverse covariance covariance matrix by (penalised) maximum likelihood \citep{Friedman:2008,Jankova:2017}.

The partial correlations can also be determined from regressions. This is called neighbourhood selection or nodewise selection \citep{Meinshausen:2006,Buhlmann:2014b,Maathuis:2018,Waldorp:2021}. This is because the regression coefficient is proportional to the partial covariance. Hence, if the partial covariance is 0, then the regression coefficient must also be 0 (see Appendix \ref{sec:appendix-gaussian-graphical-model} for details). 
Thus, determining the GGM is reduced to a series of regressions, one regression for each node regressed on all $p-1$ remaining nodes, with the objective of determining the neighbourhood. Then, when all of the neighbourhoods have been determined, they can be put together to make up the entire graph. We therefore discuss a single regression, representing all $p$ regressions for the entire graph, selecting the nodes that are part of the neighbourhood of a node.

\subsection{Nodewise regression}\noindent
In nodewise selection, each node in turn is the dependent variable and the remaining nodes are the independent or predictor variables. We call variable $X_i$ the dependent variable and denote it by $Y$ and the remaining nodes are $X_j$ for $j\ne i$. In the scenarios we discuss we will be selecting a set of nodes that are relevant to predicting $Y$. We let $J$ denote the index set for the predictors used in predicting $Y$ (which equals $X_i$) with a subset of the remaining nodes. The set $J$ never contains the dependent variable $Y=X_i$ (no self loops), so that $J$ is a subset of $\{0,1,\ldots,i-1,i+1,\ldots,p\}$. We assume there is a true subset $S$, referred to as the support, such that $\beta_{j}\ne 0$ if $j\in S$ and $\beta_{j}=0$ if $j\notin S$. 
To obtain the estimates of $\beta_j$ from the linear regression, we minimise the least squares function 
\begin{align*}
\text{LS}(J)=\sum_{i=1}^n ( Y_i - \hat{Y}_J)^2\quad \text{ and } \quad \hat{Y}_J = \sum_{j\in J} X_j\hat{\beta}_j.
\end{align*}
For each model (subset) $J$ we can obtain an estimate of the coefficients corresponding to the variables $x_j$ such that $j\in J$. By estimating $\beta_j$ we obtain information on the neighbourhood of node $i$: for any $\beta_j\ne 0$ we draw the edge $(i,j)$; and if $\beta_j=0$, then there is no edge between $i$ and $j$. We therefore require estimates of the $\beta_j$ with $j\in J$.

We can rewrite the model in matrix algebra, with $Y=(Y_1,Y_2,\ldots,Y_n)$ the $n$-dimensional vector, $X=(X_1,X_2,\ldots,X_p)$ the $n\times p$-dimensional matrix and $\beta=(\beta_1,\beta_2,\ldots,\beta_p)$, $Y = X\beta$. The least squares function can be written in terms of the Euclidean distance function $|| a ||_2 = \sqrt{\smash[b] a_1^2 +a_2^2 +\cdots a_p^2}$. The least squares function is then $\text{LS}(J)=||Y-X\beta_{J}||_2^2$.
Minimisation is achieved from the normal equations 
\begin{align}
X^\top X \beta = X^\top Y.
\end{align}
Similar to numbers on the real line $\mathbb{R}$, we require that we can invert $X^\top X$ so that we obtain our estimate. For numbers on the real line we get for $xb=c$, that we can invert $x$ so that $b=c/x$. Similarly, the inverse $(X^\top X)^{-1}$ has the property that $(X^\top X)^{-1}(X^\top X)=I$, where the identity matrix $I$ is such that $I X=X$. Then our least squares (LS) estimate is 
\begin{align}\label{eq:least-squares-estimate}
\hat{\beta}^{LS}=(X^\top X)^{-1}X^\top Y.
\end{align}
It is clear from this derivation that we need the inverse $(X^\top X)^{-1}$. This assumes that the rank (i.e., the number of linearly independent vectors in $X$) is at least $p$. A necessary (but not sufficient) condition is that $p<n$. For sufficiency we also require that there is no collinearity (i.e., the eigenvalues of $X^\top X$ are all positive). Given these assumptions, in the linear model the estimate $\hat{\beta}^{LS}$ is unbiased (i.e. $\mathbb{E}(\hat{\beta}^{LS})=\beta_S$) and has smallest variance (i.e., $\text{var} (\hat{\beta}^{LS}) \le \text{var} (\beta_S)$ for any other estimate $\beta_S$).
Good references for these results are, for instance, \citet{rao90} and \citet{Seber:2012}.


\subsection{Estimation when $p>n$}\label{sec:appendix-estimation-p-n}\noindent
We now know that in order to obtain an estimate $\hat{\beta}$ we require the inverse of $X^\top X$ to exist. If $p>n$ then this is not the case. As a consequence, there is no unique solution \citep{PringleRayner71,Searle71,Schott97}. We can understand this by considering the eigenvalues (spectrum) of $X^\top X$, denoted by $\lambda_j$. If the $p\times p$ matrix $X^\top X$ has full rank $p$, then it has $p$ eigenvalues $\lambda_j>0$ \citep{Schott97}. If $X^\top X$ does not have full rank (because in our case $p>n$), but has rank $k$, say, then there are $p-k$ eigenvalues equal to 0. This refers to the idea that a part of the space spanned by $X^\top X$ projects on the null-space (i.e., the space such that $X^\top X u=0$ for any vector $u\ne 0$). Hence, there is no unique solution when $X^\top X$ is $p>n$.

One way to obtain a unique solution is to impose constraints on the vector $\beta$. One of the constraints is to impose an upper bound on the sum of absolute values, i.e., $\sum_j  |\beta_j| \le c$, for some $c>0$. Then the problem 
\begin{align*}
\min_{\beta} ||Y-X\beta ||_2^2 \quad \text{ such that }\quad \sum_{j\ne i} |\beta_j| \le c,
\end{align*}
is called the least absolute shrinkage and selection operator, abbreviated by lasso \citep{Tibshirani:1996}. The lasso has several attractive properties, like selecting the non-zero coefficients and consistency (i.e., converging with $n$ to the true value), given certain strong assumptions \citep[see, e.g.,][]{Wainwright:2009,Buhlmann:2011,Hastie:2015}. The lasso has no closed form solution and is not amenable to inference \citep{Potscher:2009,Buhlmann:2014b}. However, several workarounds have been obtained, like the debiased lasso \citep{Geer:2014,Waldorp:2024} and the multi sample-split \citep{Meinshausen:2009,Dezeure:2015}.

Another constraint that can be imposed on the vector $\beta$ is to upper bound the sum of squares of the parameters. This leads to the problem 
\begin{align*}
\min_{\beta} ||Y-X\beta ||_2^2 \quad \text{ such that }\quad \sum_{j\ne i} \beta_j^2 \le c.
\end{align*}
The solution to this minimisation problem (in terms of the Lagrangian) is given by
\begin{align}\label{eq:ridge-estimator}
\hat{\beta} = (X^\top X + \alpha I_p)^{-1} X^\top Y,
\end{align}
where $\alpha>0$ is some constant. This is called the ridge estimator \citep{Hoerl:1970}. It depends on setting the value $\alpha$. This solution is equivalent to jointly minimising the least squares function and the sum of squared parameters \citep{Rao:1999}. Hence we obtain the solution such that $||\hat{\beta}||_2^2=c$ \citep{Hastie:2001}. 
The solution $\lim_{\alpha\to 0}\hat{\beta}$ is called the minimum norm solution \citep{Ben-Israel:2003} or the ridgeless solution \citep{Hastie:2019,Dar:2021}. 
Note that if $n>p$, $\alpha=0$, and there is no multicollinearity, then the ridge estimator $\hat{\beta}$ in (\ref{eq:ridge-estimator}) reduces to the standard least squares estimator in (\ref{eq:least-squares-estimate}) where $X$ has full rank $p$. 

The benefits of ridge regression over the lasso are that (a) the ridge regression has a closed form solution, (b) the ridge estimator is continuous and so allows a sampling distribution and the calculation of standard errors, and (c) the ridge estimator is able to handle multicollinearity \citep{Buhlmann:2013}.
Property (b) is especially appealing in practice because it allows for direct inference with the ridge estimate \citep{Buhlmann:2013,Waldorp:2024}. The estimate has a bias $(X^\top X + \alpha I_p)^{-1} X^\top X-I$, which shows that if $\alpha\to 0$, and there is no collinearity, then the bias reduces to 0. 
The variance can be computed from which the standard errors are obtained for confidence intervals and $p$-values \citep{Hoerl:1970,Gruber90}. In \citet{Buhlmann:2013} a strategy is proposed to reduce the bias and obtain more reliable confidence intervals and $p$-values; this is also discussed in \citet{Waldorp:2024}. 

The parameter $\alpha$ in (\ref{eq:ridge-estimator}) can be obtained by $k$-fold cross-validation. 
The ridge estimator also has a Bayesian interpretation, where $\alpha$ denotes common variance for the normally distributed prior distribution with independent parameters \citep{Gruber90}, but $\alpha$ can also be related to the  $\eta$-generalised Bayesian approach, where $\alpha=\eta^{-1}$ \citep{Grunwald:2017}. 
The ridge estimator is biased (i.e., the expected value of the estimate is not the true estimator), like the Lasso, but the solution is in closed form and is continuous. Hence, inference on the ridge estimator directly is possible \citep[but see][for improvements]{Buhlmann:2013} and, as we will see shortly,  the ridge estimator leads to shrinkage useful for model selection and can be linked to a geometric interpretation in high-dimensional models. 

%

\subsection{Realistic goals for high-dimensional nodewise graphical models}\label{sec:nodewise-graphical-model}\noindent
In the context of graphical models, the objective is to find the correct neighbourhood (nodes that are directly connected to a node) of all nodes in the graph, since then we obtain the correct set of non-zero edges for the entire graph \citep{Meinshausen:2006}. The ideal is, therefore, that with high probability, for any node, the set of estimated and selected non-zero edges $\hat{S}$ is equal to the set of true edges $S$, i.e., $\mathbb{P}(\hat{S} = S)= 1-\epsilon$, with small $\epsilon>0$. However, in the high-dimensional setting this requires a strong and untestable assumption that the smallest value $\beta_j$ is larger than a value in the order of $|S|\sqrt{\smash[b]{\log(p)/n}}$ \citep[the beta-min condition,][]{Buhlmann:2013}, where $|S|$ denotes the cardinality of the true set of non-zero edges for the node in question. Foregoing this assumption may lead to the case where the set of true edges in the neighbourhood of a node is in the set of estimated edges with high probability, i.e., $\mathbb{P}(S\subset \hat{S})=1-\epsilon$ \citep{Buhlmann:2011}. Consequently, this may lead to an undesirably high false positive rate. If for each node we select only a few additional, spurious edges, then for the entire graph, the set of spurious edges will be proportional to the set of nodes. So, for large graphs, the set of spurious edges will be large. Therefore, a more realistic objective in graphical modelling is to obtain a set of neighbours such that for any node, the estimated set of non-zero edges $\hat{S}$ is in the true set of edges $S$, i.e., $\mathbb{P}(\hat{S}\subseteq S)=1-\epsilon$. 

In order to achieve the goal of low false positive rate in the high-dimensional setting, it is necessary that the large space of possible models is accounted for \citep{Foygel:2013,Dwivedi:2020,Cheema:2020}. Hence, a penalty for allowing such a large space of possible non-zero edges should be part of the neighbourhood selection. We show in Section \ref{sec:minimum-description-length} that because of the penalty in minimum description length (MDL), this selection method achieves the goal of obtaining the correct neighbourhood or smaller, but will not lead to a neighbourhood that is too large (no false positive edges). First, we show the impact of the high-dimensional setting on the mean squared error, which leads to the problems in neighbourhood selection.

\section{Mean-squared error and the bias-variance trade-off}\label{sec:mean-squared-error-bias}\noindent
Determining the Gaussian graphical model was reduced to a series of regressions (one for each node) when performing nodewise selection \citep{Meinshausen:2006}. Hence, model selection can be applied to each regression separately. We therefore apply ideas of mean-squared error to a single regression to illustrate the central concepts involved.

Model selection is usually considered as a trade-off between model fit and generalisation to other datasets. Rather than fitting exactly all datapoints of a particular dataset (overfitting), the aim is to find regularity that will allow the model to describe other datasets as well and generalise \citep{Myung:1998,Grunwald:2005,Claeskens:2008}. How close the model follows the dataset is related to bias and how well the model generalises to other datasets is related to the variance; the mean-squared error is the sum of the bias squared and the variance \citep[made precise in Section \ref{sec:mse-decomposition},][]{Hastie:2001}. We first discuss the intuition of the bias-variance trade-off and how this may fail in the high-dimensional setting, and then in Section \ref{sec:mse-decomposition} we discuss rigorous results about the bias-variance trade-off in ridge regression. 

%
\begin{figure}\centering
\begin{tabular}{@{\hspace{-1em}} c @{\hspace{1em}} c @{\hspace{-1em}} c}
\textsf{}		&\textsf{test MSE}			&\textsf{test bias-variance}\\[-2.5em]
		&\pgfimage[width=.5\textwidth]{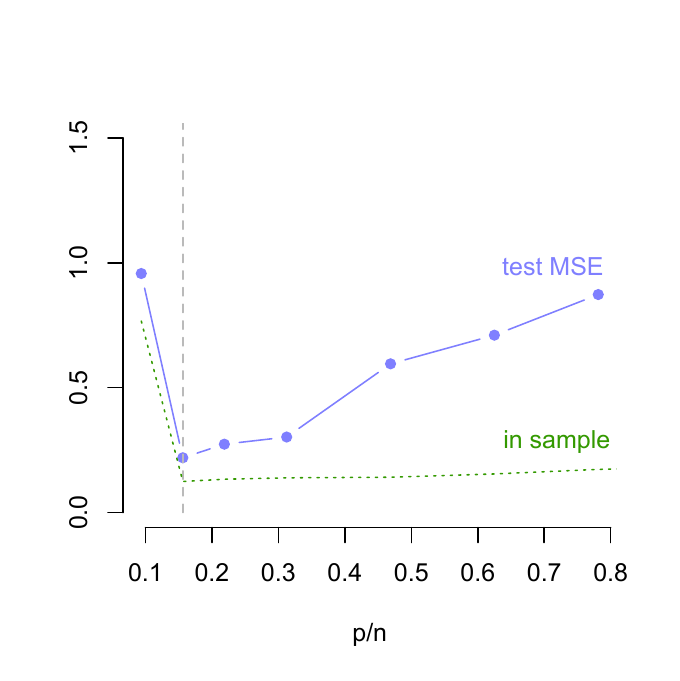}		&\pgfimage[width=.5\textwidth]{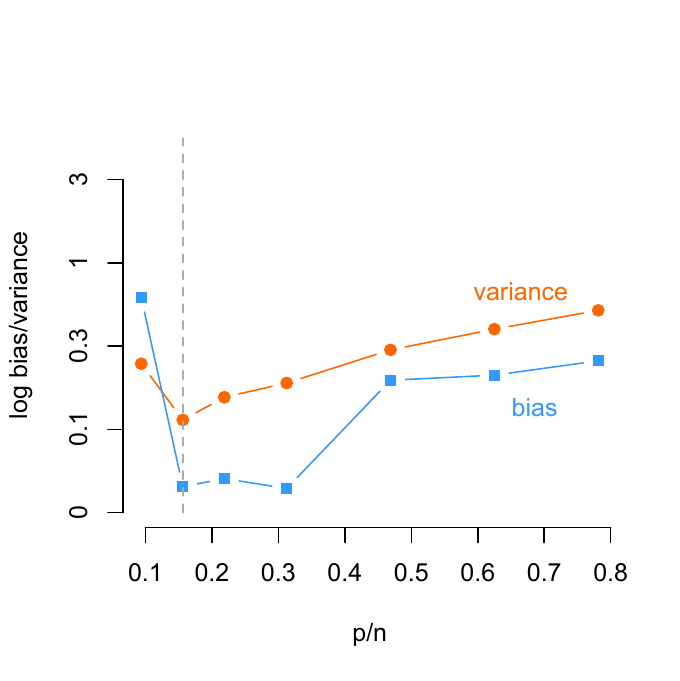}\\
	&	(a)		&(b)
\end{tabular}
\caption{In (a) the test and train mean squared error (MSE) for a true linear model as a function of $p/n$, the ratio of the number of parameters and the number of observations. The true model is at $p/n\approx 0.15$. In (b) are the squared bias (squares) and variance (circles) for the test set.}
\label{fig:mse-var-bias-classical}
\end{figure}
The classical idea of model selection is illustrated in Figure \ref{fig:mse-var-bias-classical}. The situation is a linear regression model as described in Section \ref{sec:gaussian-graphical-model} for the GGM. The regression coefficients are estimated by ridge regression for an increasing number of predictors that we put in the model with a subset of the data called the training set (we describe the details of the simulation in Section \ref{sec:simulations}). In Figure \ref{fig:mse-var-bias-classical}(b) we see the bias (squared)  of the test set (independent data not used for estimation) based on the coefficients of the training set as a function of $p/n$, the ratio of the number of parameters in the model and the number of observations (in the training set). Then the mean-squared error (MSE) is obtained in the remaining subset of observations, the test set; and is therefore called test MSE. The correct model is at $p/n\approx 0.15$ (the dashed vertical line). We see that the bias decreases sharply until the correct value is obtained and then increases again, and similarly for the variance. The squared bias and variance together make up the test MSE (also known as excess risk and prediction error). This is shown in Figure \ref{fig:mse-var-bias-classical}(a). Here we see that at the point $\approx 0.15$ (the correct model), where the test MSE is lowest, the corresponding  model should be selected. 
This is the classic case discussed in for instance \citet[][Chapter 7]{Hastie:2001} and \citet[][Chapter 1 and Chapter 2]{Grunwald:2005}. The test MSE is used instead of the MSE based on the training (estimation) data (seen as the dashed green line in Figure \ref{fig:mse-var-bias-classical}(a), in sample) because the training MSE is overly optimistic and so remains low  \citep[see][]{Efron:1986b,Hastie:2001}. This trade-off between bias and variance implies that the complexity of the model  cannot be too high because then the model tends to overfit, i.e., fit only the training data and does not generalise well to other data.

In recent years it has appeared that this classical scenario is not all there is to it. There are situations where a model can over fit (select too many non-zero coefficients) and still generalise well. This phenomenon \citep[sometimes called benign overfitting,][]{Bartlett:2020} is seen in Figure \ref{fig:ridge-mse-var-bias}(a). The linear regression coefficients are estimated using the ridge estimator where the number of parameters (predictors) is increased such that the ratio $p/n$ (for the training set) ranges between 0.1 to 2. At $p/n=0.1$ we have 10 observations per parameter and at $p/n=2$ we have 0.5 observations per parameter. The classical case is seen up to $p/n=1$ (i.e., $p=n$), the interpolation point (as in Figure \ref{fig:mse-var-bias-classical}). From the interpolation point on, the model is overparameterised ($p>n$) and has very low training MSE (green dashed line in Figure \ref{fig:ridge-mse-var-bias}(a)).
In Figure \ref{fig:ridge-mse-var-bias}(b), we see that, as expected, the bias decreases after the interpolation point. Unexpectedly, the variance starts to decrease as well, suggesting that the model will generalise well to other datasets. But this seems paradoxical since overparametrisation (and hence overfitting) would suggest that noise is being modeled. But the decreasing variance implies that the overparameterised model is capable of capturing regularities in other datasets, i.e., it will generalise well. 
\begin{figure}\centering
\begin{tabular}{@{\hspace{-1em}} c @{\hspace{1em}} c @{\hspace{-1em}} c}
\textsf{}		&\textsf{MSE}			&\textsf{bias-variance}\\[-2.5em]
	&\pgfimage[width=.5\textwidth]{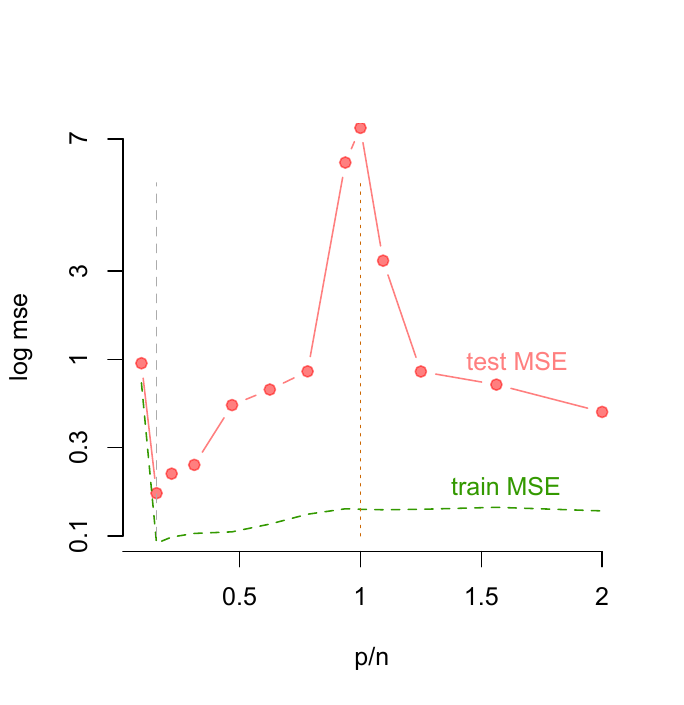}		&\pgfimage[width=.5\textwidth]{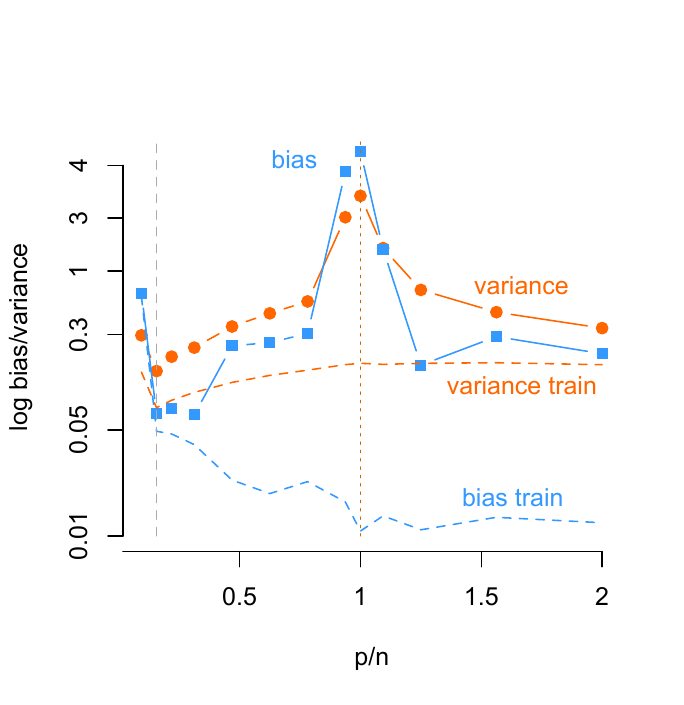}\\[-1em]
		&(a)		&(b)
\end{tabular}
\caption{In (a) the test and train mean squared error (MSE, in logarithm) for a true linear model as a function of $p/n$, the ratio of the number of parameters and the number of observations (in the training set). The true model with 5 predictors is at $p/n\approx 0.15$, indicated by the vertical grey dashed line. In (b) is the bias (squares) and variance (circles), both in logarithm, for the test set and for the training set (dashed lines). }
\label{fig:ridge-mse-var-bias}
\end{figure}

This phenomenon is often referred to as the double descent \citep{Belkin:2019}, since there is a second descent after the interpolation point ($p=n$). 
The double descent has been investigated thoroughly \citep[e.g.,][]{Hastie:2019,Bartlett:2020,Bartlett:2021} with some results as follows. We assume for these results that all random variables are normally distributed and have independent identically distributed predictors. 
\begin{itemize}
\item[(a)] In linear regression when the model is approximately correct and $p/n$ is large (overparameterised), the global minimum of the test MSE is obtained at the correct model. 

\item[(b)] If the model is misspecified and $p/n$ is large (overparameterised), then the global minimum of the test MSE \textit{could} be obtained at a largely overparameterised (incorrect) model.
\end{itemize}
The first result (a) implies that we are still able to correctly identify the correct neighbourhood for each node when approximately correctly specified. And the second result (b) implies that when the model is not correctly specified, e.g., nonlinear, it might be the case that in neighbourhood selection far too many edges will be selected because the MSE is lowest in that case. This last result explains why in deep learning (and in machine learning) astounding results have been obtained by hugely overparameterised models. Examples are object and speech recognition and traffic sign classification \citep[see, e.g.,][]{Goodfellow:2016}. 

The implications for neighbourhood selection are that obtaining the correct neighbourhood or smaller requires increasing the ridge parameter strongly, so as to counteract the small MSE that is obtained for large misspecified models. It is in general questionable whether this can be achieved by carefully selecting a ridge parameter (by e.g., cross-validation), or whether a separate penalty is required that counteracts the low MSE obtained in the case of high-dimensions. We will see in Section \ref{sec:minimum-description-length} that we require a separate penalty in the high-dimensional case.

\section{Mean-squared error decomposition in bias and variance}\label{sec:mse-decomposition}\noindent
We remain within the framework of nodewise selection and focus on a neighbourhood of any node to obtain the non-zero edges, which leads to regressing one node on the remaining nodes. 

We define the mean squared error (MSE) for linear models as a quantity that represents the expected loss incurred by estimating with model $J$ instead of the true model $S$. 
Let $\hat{Y}_{S}$ be the prediction based on the true set $S$ of predictors, so that $\beta_{j}\ne 0$ if $j \in S$ and $\beta_{j}=0$ if $j\notin S$, which we refer to as $\beta_S$. Furthermore, we assume that the data are generated according to an additive model where errors $e$ are added to $\hat{Y}_S$, so that 
\begin{align}\label{eq:linear-model}
Y = \hat{Y}_S + e = \sum_{j\in S} X_j\beta_j +e,
\end{align}
where the errors $e$ are independent and identically distributed with mean 0 and variance $\sigma^2$.

Test MSE (excess risk) is defined in terms of a so-called test set, an independent point (or set of points) not encountered before in the so-called training set \citep{Hastie:2001}. Let $(X_0,Y_0)\in\mathbb{R}^{p+1}$ be a test data point independent of, but from the same distribution as, all other $(X_i,Y_i)\in\mathbb{R}^{p+1}$. With the training data the ridge estimate $\hat{\beta}_J$ with model $J$ is obtained.  Then the excess risk \citep[out of sample prediction risk,][]{Hastie:2019} with respect to the true parameter $\beta_S$, is defined as the squared difference between the predictions obtained with model $J$ and the predictions obtained with the true model $S$ for test data $(X_0,Y_0)$ 
\begin{align}
R(J)=\mathbb{E}\left[ (Y_0-X_0^\top\hat{\beta}_{J} )^2 - (Y_0 - X_0^\top \beta_S)^2\right],
\end{align}
where the expectation is conditional on the training data $X$, which functions in the ridge estimate $\hat{\beta}_J = (X^\top X + \alpha I_p)^{-1} X^\top Y$ from (\ref{eq:ridge-estimator}). Similar to \citet{Hastie:2019} and \citet{Bartlett:2021} we can obtain the excess risk in terms of the squared bias and variance (see Lemma \ref{lem:mse-decomposition} in Appendix \ref{sec:appendix-mean-squared-error})
\begin{align}\label{eq:risk}
R(J) 
&= B(J) +V(J),
\end{align}
%
with
\begin{align}\label{eq:bias-variance}
B(J) = (\beta_S -\mathbb{E}\hat{\beta}_J)^\top \Sigma (\beta_S -\mathbb{E}\hat{\beta}_J),
\hspace{0.5em}
V(J) =  \mathbb{E}(\hat{\beta}_J-\mathbb{E}\hat{\beta}_J)^\top \Sigma (\hat{\beta}_J-\mathbb{E}\hat{\beta}_J),
\end{align}
where $B(J)$ is called the squared bias and $V(J)$ is called the variance, and $\Sigma  =\mathbb{E}(X_0 X_0^\top)$, which is the same as the covariance matrix of the predictors $\mathbb{E}(X^\top X)$. This is the famous variance and squared bias decomposition \citep{Hastie:2001}. The  classical idea is that reducing bias will increase variance and vice versa, thus balancing the MSE. Hence, a good model will minimise the MSE (excess risk) and most model selection methods are geared towards this goal.

The training bias $B(J)$ is 0 for the linear model, when $p<n$, and the linear model is correct \citep{Hastie:2019}. However, for the ridge estimator $\hat{\beta}$ in (\ref{eq:ridge-estimator}) the bias is non-zero \citep{Hoerl:1970}. It turns out that when the in-sample (training) residuals are 0 (i.e., overparameterised, see Appendix \ref{sec:app-residuals-training-test}), then still the variance on the test set need not be large \citep[][]{Bartlett:2021}. Here we explain how this can be achieved with ridge regression.

To show the impact of ridge regression on the variance and the squared bias (and hence the MSE), we will rewrite the variance and squared bias from (\ref{eq:bias-variance}). To do this we further assume that $\Sigma=\sigma_\xi^2 I$ (i.e., is isotropic). We then obtain the test variance (see Lemma \ref{lem:variance-eigen-values})
\begin{align*}
V(J) = \sigma_\xi^2\sigma^2_e \sum_{j=1}^n \frac{\lambda_j} {(\lambda_j + \alpha)^2},
\end{align*}
where $\lambda_j$ are the eigenvalues of $X^\top X$. Hence, by increasing $\alpha$ we see that we are lowering the variance $V(J)$, and hence the test MSE. Additionally, we can observe that if the contribution of the eigenvalues $\lambda_{d+1}$ up to $\lambda_{p}$, for some $d<p$, is small, then the variance will not increase, and hence, prediction may still be accurate. This implies that adding more predictors (making $p$ larger) will not necessarily inflate the variance $V(J)$ much, which is stabilised by the ridge parameter $\alpha$. This argument is used in \citet{Bartlett:2020,Bartlett:2021} to explain why in some cases machine learning with a large number of parameters predicts well; the noise is distributed among the large number of directions (given by the eigenvectors) so that none of the eigenvalues $\lambda_k$ with $k>d$ will be larger than the ridge parameter $\alpha$.

Similar to the test variance $V(J)$ the bias $B(J)$ can also be reduced to a more amenable form when we assume that $\Sigma=\sigma_\xi^2 I$. With this assumption we obtain (see Lemma \ref{lem:bias-eigen-values})
\begin{align}
B(J) = \sigma^2_\xi\frac{\gamma_1\alpha^2}{(\lambda_1+\alpha)^2}+ \sigma^2_\xi\sum_{j=2}^n\frac{\alpha^2}{(\lambda_j+\alpha)^2},
\end{align}
where $\gamma_1$ is the single non-zero eigenvalue of $\beta_S\beta_S^\top$. This shows different behaviour for the squared test bias than the test variance $V(J)$: the squared test bias could increase by increasing $\alpha$. 

This analysis shows that the peak in the test MSE for the ridge estimator in Figure \ref{fig:ridge-mse-var-bias}(a) is in some sense an artefact of choosing the wrong ridge parameter $\alpha=0.0001$ (used in Figure \ref{fig:ridge-mse-var-bias}). The peak disappears when we use a carefully selected ridge parameter $\alpha$ using $k$-fold cross-validation. This is shown in Figure \ref{fig:mse-var-bias} in Appendix \ref{sec:appendix-simulation-details}. It can also be seen in Figure \ref{fig:mse-var-bias} that the lasso also does not show such a peak but the lasso has generally larger MSE than the ridge.

We can now conclude from this analysis that there are two views on how to manage the test variance in the case of high-dimensional data. 
\begin{itemize}
\item[($a$)] We can increase the ridge parameter $\alpha$ directly, reducing the test variance $V(J)$, and hence obtaining reasonable test MSE. 
\item[($b$)] we can constrain the size of $||\beta ||_2$ in the ridge estimator, therefore, decreasing the model complexity (volume) of the model. 
We do this by constraining the ridge estimator with small $c$ such that $||\beta||_2\le c$. 
\end{itemize}
In both cases ($a$) and ($b$) we are either directly or indirectly constraining the total signal $||\beta||_2$. In model selection this can be done by either increasing $\alpha$, equivalently, decreasing $c$ in the ridge constraint $||\beta||_2\le c$, or by incorporating the complexity (volume) of the model. (The ridge parameter $\alpha$ can also be interpreted in terms of the signal-to-noise (SNR) with similar conclusions relating to SNR, see Appendix \ref{sec:appendix-geometry-metric-entropy}.)

\subsection{Model misspecification}\label{sec:model-misspecification}\noindent
More generally, a model is defined as a class of probability distributions (in a favourable case in terms of corresponding densities), and the model is correctly specified if the distribution is in the particular class used for optimisation \citep{Myung:1998}.  In our setting of nodewise regression with Gaussian variables, to determine the neighbourhood of a node, the model class is the set of univariate normal distributions parameterised by the mean $\mu$ and variance $\sigma^2$, with the conditional mean $\mathbb{E}(Y\mid x)$ given by a linear function $x\mapsto x^\top \beta$, so that the model is correctly specified. Model misspecification can then be defined in terms of the model class not containing the correct distribution \citep{Grunwald:2007,Dar:2021,Kleijn:2026}. Continuing our example of normal distributions, the conditional mean could be given by some nonlinear function $f$, say a logistic function $x\mapsto 1/(1+\exp(x^\top\beta))$, while we only consider linear models $x^\top\beta$ for estimation. One interpretation of the linear parameters when the true model is nonlinear, is in terms of a weighted average of derivatives of the conditional mean with respect to the covariate $x_j$, which reduces to mixtures of treatments in causal analysis \citep{White80,Kunievsky:2025}. The problem with misspecifying the conditional mean may lead to the situation where the residual variance is heterogeneous, and the `best' model in terms of being closest to the true model in Kullback-Leibler divergence, is a mixture of models which is not in the model space (non-convex model space). For more information on this see Appendix \ref{sec:appendix-misspecification} and the seminal work of \citet{Grunwald:2017}. 

For nodewise regression with misspecification, we assume that the model is some nonlinear function $f$ modelled by a linear function $X^\top\beta$. 
If the misspecification is characterised by an additive term, so that the model is $X^\top \beta + \xi$, where we do not know about $\xi$, then the misspecification can be described by the mismatch $\mathbb{E}||\xi||^2$ \citep{Hastie:2019,Dar:2021}. In our approach we do not assume a linear decoupling, but simply that there is some nonlinear function $f$.
In that case we obtain an additional term in the excess risk (see Lemma \ref{lem:mse-misspecification} in Appendix \ref{sec:appendix-mean-squared-error})
\begin{align}\label{eq:mse-misspecified}
R_f(J) &= B_f(J) + V_f(J) + M_f(J),
\end{align}
where
\begin{align*}
B_f(J) &=\mathbb{E}\left[ (f(X_0,\beta_S) - \mathbb{E}(X_0^\top\hat{\beta}_J))^2 \right],\\
V_f(J) &=\mathbb{E}\left[ (X_0^\top\hat{\beta}_J - \mathbb{E}(X_0^\top\hat{\beta}_J))^2 \right], \quad\text{and}\\
M_f(J) &= 2\mathbb{E}\left[ (Y_0-f(X_0,\beta_S))(f(X_0,\beta_S)-X_0^\top\hat{\beta}_J) \right].
\end{align*}
If we are able to bound the misspecification $|f(X_0,\beta_S)-X_0^\top\hat{\beta}_J|=O_p(K)$, we obtain a bound on the misspecification term by the Cauchy-Schwarz inequality: $M_f(J)\le \sigma O(K)$ (see Proposition \ref{prop:misspecification-bound}). This bound on the misspecification will lead to the possibility of obtaining the correct neighbourhood or smaller with minimum description length in the high-dimensional setting. Additionally, in linear models we can leverage the result that for $L\subseteq J$, we have that $\mathbb{E}[(Y_0- X_0^\top\hat{\beta}_J)^2] \le  \mathbb{E}[(Y_0- X_0^\top\hat{\beta}_L)^2]$ (see also Appendix \ref{sec:appendix-mdl-underfit}). Hence, the misspecification decreases as the ratio $p/n$ increases.

\section{Nodewise selection in high dimensions}\label{sec:model-selection}\noindent
A regression model is represented by a distribution for the residuals \citep{Hastie:2001} and, for the GGM using linear regressions, is identified with a set of nodes in the neighbourhood of a node, indexed by the set $J$. Increasing the number of included nodes in a nodewise regression is hence identified with different distributions, which can then also be indexed by $J$. Nodewise selection is therefore concerned with distinguishing probability distributions. The Kullback-Leibler divergence \citep[KL,][]{Cover:2006} is often used to distinguish distributions, where a KL of 0 indicates no difference between distributions, and a value larger than 0 indicates different distributions (see Appendix \ref{sec:appendix-akaike-information-criterion}). Both the Akaike information criterion (AIC) and minimum description length (MDL) can be considered as minimising the KL of a hypothesised distribution (our linear regression models) and some generative  (unknown) model \citep{Grunwald:2000,Myung:2006,Claeskens:2008}. The AIC estimates the KL and corrects the incurred bias of doing so (see Appendix \ref{sec:appendix-akaike-information-criterion}). The MDL, on the other hand, applies the idea of encoding the data and hypothesised model to obtain a reasonably approximating model, which we explain here.


In the high-dimensional case nodewise selection is problematic precisely because of the large parameter space. In high dimensions we obtain eventually near-zero training error, even with a linear model \citep[][see Appendix \ref{sec:app-residuals-training-test}]{Bartlett:2020}. This is because the parameter space of the model is so vast (i.e., high complexity), that nearly all data points can be accounted for. Conceptually, the volume of the parameter space more or less explodes with increasing dimension and, hence, dominates the regression (see Appendix \ref{sec:appendix-high-dimensional-space} and Appendix \ref{sec:appendix-geometry-metric-entropy}). This may lead to selecting too many neighbours (over fit), which, in nodewise selection leads to a graph with many spurious edges. In order to obtain a correct neighbourhood or smaller in such high-dimensional situations, this increased model complexity needs to be taken into account explicitly, especially in the case of misspecification. 



Here we discuss the MDL in some detail and show that in the high-dimensional setting it leads to no false positive edges with high probability. The reason is that the penalty (model encoding) counteracts the large number of possible edges in a neighbourhood. For comparison, we also show results for the AIC, BIC, the Lasso, and another version of MDL for nodewise selection. We discuss these in Appendices \ref{sec:appendix-akaike-information-criterion} for the AIC and \ref{sec:appendix-minimum-description-length-optimised} an optimised version of MDL. There are excellent books that describe these derivations well. For instance, the AIC and BIC (and other model selection criteria) are described in \citet{Claeskens:2008} and MDL is described extensively in the book by \citet{Grunwald:2007}.

\subsection{Minimum description length}\label{sec:minimum-description-length}\noindent
The minimum description length (MDL) is derived from the idea that compression (encoding) of data can be translated into probabilities \citep{Grunwald:2005}. MDL is concerned with the length of (prefix or invertible) optimal codes which can be separated into a part where the data are encoded given the model and a part where the model is encoded.  The length of the data can be encoded by the log likelihood  \citep{Hansen:2001}, i.e. $\text{length}(Y,x)=-\log f(Y\mid x,\hat{\beta})$, where $f$ is the conditional density of $Y$ given the predictors $X=x$ (see Appendix \ref{sec:appendix-akaike-information-criterion}). Additionally, the model is encoded, called the complexity, which leads to a specific formulation depending on the type of encoding \citep{Grunwald:2007}. This encoding term is the penalty for the possibly large parameter space and will to a certain extent guard against overfitting. 

For normally distributed random variables the MDL can be approximated by \citep{Myung:2006,Grunwald:2007}
\begin{align}\label{eq:mdl-full}
\text{MDL}(J) = \underbrace{\frac{n}{2}\log \hat{\sigma}^2_J}_{\text{data  code length}}+ \underbrace{\frac{1}{2}\text{tr}(S_J)\log n + \log \int_{B} \sqrt{\det(F_J)}d\beta}_{\text{model code length}} + o(1),
\end{align}
where $\text{tr}(S_J)$ is the trace of matrix $S_J$ representing the effective number of parameters for model $J$, and $F_J$ is the Fisher information matrix for model $J$, which is $F_J=X^\top X/\sigma^2$ when $n>p$,  
and the true model is exponential family \citep[][Chapter 2]{Grunwald:2005}.
The matrix $S_J$ is used to determine the number of effective parameters, and is $S_J=X(X^\top X)^{-1} X^\top$ when $n>p$; the trace of this matrix is $p$ (this is the same as the rank in this case). 

However, for the high-dimensional case ($p>n$) we find that $X^\top X$ is singular, implying that $\det(X^\top X) =0$. This is because the rank of $X^\top X$ is $\min\{n,p\}$, and because $n<p$, we have that the rank in this case is $n$, while the dimensions are $p\times p$. Thus, we must have that there are $p-n$ dimensions that are not represented by $X^\top X$ (i.e., $p-n$ vectors project onto the kernel or null space). We can use ridge regression to alleviate the problem by using instead of $X^\top X$ the matrix from ridge regression $W(\alpha)=X^\top X + \alpha I$ \citep{Hastie:2001}. We then find that $S_J=X(X^\top X + \alpha I)^{-1} X^\top$. This gives the general equation for the number of effective parameters \citep[][Section 7.6]{Hastie:2001}
\begin{align*}
df(\alpha) = \text{tr}(S_J)= \text{tr} X(X^\top X + \alpha I)^{-1} X^\top= \sum_{j=1}^{\min\{p,n\}} \frac{\lambda_j}{\lambda_j+\alpha},
\end{align*}
where $\lambda_j$ are the eigenvalues of $X^\top X$.

To approximate the integral in the penalty encoding the model in (\ref{eq:mdl-full}), we make use of the fact that in linear regression the Fisher information is independent of the parameter $\beta_j$, and hence the integral represents the open $p$-dimensional ball $\mathbb{B}^p(||\hat{\beta}||_2)$ over the Fisher information where we used the ridge parameter $\alpha$ (see Lemma \ref{lem:volume-linear-model} in Appendix \ref{sec:appendix-geometry-metric-entropy}). The $p$-dimensional ball is centered at $||\hat{\beta}||_2$ because in the ridge solution we obtain the solution such that $\alpha(||\beta||_2^2-c)=0$.  We then obtain  \citep{Cheema:2020,Grunwald:2007}
\begin{align}\label{eq:mdl-penalty}
\log \int_{B} \sqrt{\det(F)}d\beta \approx \log \det(X^\top X+\alpha I) + \log \mathbb{B}^p(||\beta||_2)/\sigma^p.
\end{align}
The $p$-dimensional ball in the last term, $\log \mathbb{B}^p(||\beta||_2)$, represents the volume of the parameter space and is discussed in Appendix \ref{sec:appendix-high-dimensional-space}. The $\log$ of this term will become large, and hence apply a stronger penalty with increasing dimension. This is why increasing the dimension can have positive effects on the generalisation error (variance in the test set). 
However, in determining the volume in this way, we have assumed that the linear model is correct. Hence, we may expect that when the model is misspecified, the term in the integral encoding the model in (\ref{eq:mdl-penalty}) may not be sufficient to guard against incorrect decisions. \citet{Grunwald:2017} show that when the model is misspecified the model space can be non-convex, and the best approximation to the true model may not be in the model space (see Appendix \ref{sec:appendix-misspecification}). However, using linear models we can with high probability conclude that MDL as defined in (\ref{eq:mdl-full}) with (\ref{eq:mdl-penalty}) will be able to guard against overfitting. This is because of the penalty term $\mathbb{B}^p(||\beta||_2)$, as predicted by \citet{Cheema:2020}. We are therefore able to achieve the goal mentioned in Section \ref{sec:nodewise-graphical-model}, that the selected neighbourhood of edges in a graph $\hat{S}$ with MDL is in the true neighbourhood $S$, with $d=|S|$. 
\begin{proposition}\label{prop:mdl-underfit}
Let $S$ denote the true support for a (possibly nonlinear) data generating process of $Y_i$ for $i=1,\ldots,n$ with $n$ fixed, and let $J$ represent the set of predictors such that $S\subset J$, where $|S|=d$ and $|J|=p$. We test linear models $X\beta_J$ for any $J$ (including $S$) by MDL as given in (\ref{eq:mdl-full}) with (\ref{eq:mdl-penalty}). Let $p$ increase and $p > d$. If $\mathbb{E}\hat{\sigma}^2_J>\gamma$, for some $\gamma>0$, then with probability at least $1-\tfrac{d}{p}\exp(-(p-d))$ it holds that $\hat{S}\subseteq S$; equivalently, the probability of false positives goes to 0. 
\end{proposition}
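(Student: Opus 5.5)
We compare $\text{MDL}(J)$ with $\text{MDL}(S)$ for every $J\supset S$ with $|J|=p>d$. The claim $\hat S\subseteq S$ follows once we show $\text{MDL}(J)>\text{MDL}(S)$ for all such $J$, with the stated probability. Any selected set not contained in $S$ contains a spurious index, and the comparison for a candidate $J\not\subseteq S$ reduces to the nested case. The reason is that $J\cup S$ has data code length no larger than that of $J$, by the monotonicity of residuals for $L\subseteq J$ noted in Section \ref{sec:model-misspecification}, while its penalty is larger. The difference splits into a data part and a model part,
\begin{align*}
\text{MDL}(J)-\text{MDL}(S)=\tfrac{n}{2}\log\frac{\hat\sigma^2_J}{\hat\sigma^2_S}+\tfrac12\big(df_J(\alpha)-df_S(\alpha)\big)\log n+\log\frac{\det(X_J^\top X_J+\alpha I)}{\det(X_S^\top X_S+\alpha I)}+\log\frac{\mathbb{B}^p(\|\hat\beta_J\|_2)/\sigma^p}{\mathbb{B}^d(\|\hat\beta_S\|_2)/\sigma^d}.
\end{align*}
The three penalty terms are handled directly. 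The effective-degrees-of-freedom term is nonnegative, since adding columns does not decrease $\sum\lambda_j/(\lambda_j+\alpha)$ by eigenvalue interlacing. The log-determinant term is also nonnegative, because adding columns adds eigenvalues that are at least $\alpha$ and interlacing again applies. The volume term is the one that grows with $p$. Using the explicit formula for the $p$-ball from Appendix \ref{sec:appendix-high-dimensional-space}, $\log\mathbb{B}^p(r)=\tfrac p2\log\pi+p\log r-\log\Gamma(p/2+1)$, we show this term grows at least linearly in $p-d$ once the radius $\|\hat\beta_J\|_2$ is controlled relative to $\sigma$. I would aim for a lower bound of order $c(p-d)$.

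The data code length term is the only one that can be negative. Here the assumption $\mathbb{E}\hat\sigma^2_J>\gamma$ is used: it keeps $\log\hat\sigma^2_J$ bounded below, so the training residuals do not collapse to zero (which would send the log to $-\infty$ in the interpolating regime). Since $n$ is fixed, $\tfrac n2\log(\hat\sigma^2_J/\hat\sigma^2_S)\ge \tfrac n2\log(\gamma'/\hat\sigma^2_S)$ is bounded by a constant on the event $\{\hat\sigma^2_J\ge\gamma'\}$, whereas the penalty increase is of order $p-d$. For misspecified $f$, the same argument applies with $\hat\sigma^2_S$ bounded above via the decomposition (\ref{eq:mse-misspecified}) and Proposition \ref{prop:misspecification-bound}, which gives $M_f\le\sigma O(K)$. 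The probability bound should come from a concentration step. A chi-square or Gaussian tail bound on the event that the $p-d$ extra coordinates drive $\hat\sigma^2_J$ (or $\|\hat\beta_J\|_2$) below the required level gives $\exp(-(p-d))$. A union bound over the $d$ true coordinates, distributed among the $p$ positions, yields the factor $d/p$.

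The main obstacle is this last step. The hypothesis is only on the expectation $\mathbb{E}\hat\sigma^2_J$, so converting it into a high-probability lower bound on $\hat\sigma^2_J$ and on the volume radius, with exactly the rate $\tfrac dp e^{-(p-d)}$, requires a careful tail argument. I would first try a Markov/Chernoff bound on $\sum_{j\notin S}\hat\beta_j^2$ under the Gaussian design, then calibrate constants so that the linear growth of the volume penalty beats the bounded data term.
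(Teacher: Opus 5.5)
\textbf{The volume term rewards large $J$ instead of penalising it.} Your plan fails at its central step: the claim that the volume term grows at least like $c(p-d)$ once $\|\hat\beta_J\|_2$ is controlled relative to $\sigma$.

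By Stirling, the formula you quote gives $\log\mathbb{B}^p(r)=\tfrac p2\log\tfrac{2\pi e r^2}{p}-\tfrac12\log(\pi p)+o(1)$. This tends to $-\infty$ like $-\tfrac p2\log p$ for every bounded $r$; it is the vanishing-volume effect of Appendix~\ref{sec:appendix-high-dimensional-space}. The radius is bounded uniformly in $p$. Writing $\hat\beta_J=X_J^\top(X_JX_J^\top+\alpha I_n)^{-1}Y$ and using $t/(t+\alpha)^2\le 1/(4\alpha)$ gives $\|\hat\beta_J\|_2^2\le\|Y\|_2^2/(4\alpha)$ for every $J$. So bounding $\sum_{j\notin S}\hat\beta_j^2$ from above (your Markov/Chernoff step) can only certify that the volume term is very negative. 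It could grow only if $\|\hat\beta_J\|_2/\sigma\gtrsim\sqrt{p/(2\pi e)}$, which the ridge bound rules out.

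\textbf{The log-determinant claim also fails in general.} Interlacing controls only the top $d$ eigenvalues, and the Schur complement gives only $\log\det(X_J^\top X_J+\alpha I_p)-\log\det(X_S^\top X_S+\alpha I_d)\ge(p-d)\log\alpha$. Because $\operatorname{rank}X_J\le n$, at least $p-n$ eigenvalues equal $\alpha$ exactly. So for $\alpha<1$ (the simulations use $\alpha=0.5$) this term decreases linearly in $p$.

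\textbf{Your own decomposition then goes the wrong way.} Take $n$, $\alpha$ and $\sigma$ fixed, or $\sigma$ bounded below as on your event $\{\hat\sigma_J^2\ge\gamma'\}$. Then:
\begin{itemize}
\item the data term is at most $\tfrac n2\log(\|Y\|_2^2/(n\hat\sigma_S^2))$, since the ridge residual $\alpha(X_JX_J^\top+\alpha I_n)^{-1}Y$ has norm at most $\|Y\|_2$;
\item the df term is at most $\tfrac n2\log n$;
\item the log-determinant term is at most $n\log(\|X_J\|_F^2+\alpha)+(p-n)\log\alpha$ plus a $p$-free quantity;
\item the volume term is at most $-\tfrac p2\log p+O(p)$.
\end{itemize}
Hence $\text{MDL}(J)-\text{MDL}(S)\to-\infty$, and no concentration step can rescue the argument. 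A proof would need an ingredient you do not list, for instance letting $\alpha$ grow with $p$ so that the $(p-n)\log\alpha$ part of the log-determinant can offset the volume decay.

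\textbf{The probabilistic part is not in place either.} A lower bound on $\mathbb{E}\hat\sigma^2_J$ gives no lower-tail bound on $\hat\sigma^2_J$ at level $1-\tfrac dp e^{-(p-d)}$; Paley--Zygmund yields only a constant probability and needs a second moment. Neither the factor $d/p$ nor the exponent $p-d$ comes out of any step you describe; the union-bound account seems read off from the statement rather than derived.

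\textbf{The non-nested reduction does not work.} Enlarging $J$ to $J\cup S$ lowers the data code length but raises the penalty, so $\text{MDL}(J)$ and $\text{MDL}(J\cup S)$ are not ordered. The proposition only concerns $J\supset S$, so you can simply drop this step.

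\textbf{Comparison with the paper.} The paper argues differently: Markov's inequality on $\exp[\text{MDL}(S)-\text{MDL}(J)]$, Jensen for the data term, the effective-df sandwich, and \citet[Theorem 3]{Cheema:2020} for the log-determinant plus volume term. Its decay also comes from the ball volumes, via $\log(\mathbb{B}^d(1)/\mathbb{B}^p(1))=\log\pi^{-(p-d)/2}e^{-(p-d)/2}+O(d/p)$. That display has the wrong sign. The left side equals $\log\Gamma(\tfrac p2+1)-\log\Gamma(\tfrac d2+1)-\tfrac{p-d}{2}\log\pi$, which is positive for large $p$ (the paper itself notes $\mathbb{B}^p(1)\le\mathbb{B}^d(1)$) and grows like $\tfrac p2\log p$. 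For $d=5$, $p=20$ it is about $5.3$, whereas the displayed expression is about $-16.1$. So importing the paper's volume step will not close your gap.
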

A proof can be found in Appendix \ref{sec:appendix-mdl-underfit}.


We refer to MDL in (\ref{eq:mdl-full}) with (\ref{eq:mdl-penalty}) as MDL. The complexity of MDL has also been used without the integral term \citep{Rissanen:1986,Grunwald:2000}. This approximation is in that case equivalent to the Bayesian information criterion (BIC) introduced by \citet{Schwartz78}. Hence, we call that version MDL-S. 

\section{Simulations}\label{sec:simulations}\noindent
In order to determine the behaviour of nodewise selection in realistic high-dimensional scenarios in graphical models, we perform simulations for a single neighbourhood. This corresponds to the graphs shown in Figure \ref{fig:true-network}, with the perspective of a single node for which we need to determine the connected nodes (neighbourhood) among the nodes that are not connected, of which there are many more. 

The data are generated by a Gaussian graphical model. An inverse covariance matrix was generated where each node had 5 neighbours with edge coefficient 1. We then selected a node at random. Data were generated where the total number of observations was fixed at $n=40$. We vary the graph size so that the number of possible edges to be estimated ranges from $p= 3$ to $p=320$ (see Figure \ref{fig:true-network}). So, we obtain a ratio of the number of edges and the number of training observations of $p/n=0.1$ to $2$ with $p=64$ and $p/n=10$ for $p=320$. Edges (conditional dependences) are generated either according to a linear model (correct model specification) or a sigmoid function (model misspecification). The regression coefficients for nodewise selection are estimated as a linear model by ridge regression (\ref{eq:ridge-estimator}) with a training set of $n_\text{train}=32$ ($80\%$ of the total sample size $n=40$).  The test set is of size $n_\text{test}=8$. We consider the case where we know the true model is linear and we estimate the linear model (case (a) in Section \ref{sec:mean-squared-error-bias}), and we misspecify the model because the true model is sigmoidal and we use a linear approximation (case (b) in Section \ref{sec:mean-squared-error-bias}). We compare the AIC, AIC-CV (where $\alpha$ is obtained with cross-validation), MDL-S (which is the BIC), MDL-opt (the optimised MDL), MDL, the Lasso (also with $\alpha$ obtained with cross-validation), and finally, the desparsified lasso \citep[Lasso-ds,][]{Geer:2014,Dezeure:2015}.

The measures we use to evaluate the nodewise selection is the proportion of edges in a neighbourhood that is too low (under fit), correct or too high (over fit) in terms of the estimated edges in a single neighbourhood. Then, under, correct and over fit, respectively, for $R$ simulations are defined as 
\begin{align*}
\text{proportion under fit} &:=  \frac{1}{R}\sum_{r=1}^R \mathbbm{1}\{ \hat{S} \subset S\},\\  
\text{proportion correct fit} &:= \frac{1}{R}\sum_{r=1}^R \mathbbm{1}\{ \hat{S} = S\}, \quad\text{and}\\
\text{proportion over fit} &:= \frac{1}{R}\sum_{r=1}^R \mathbbm{1}\{ \hat{S} \supset S\},
\end{align*}
where $\hat{S}$ is the set of indices for the estimated non-zero edges (support) and $S$ is the true support. Given our objective of selecting a neighbourhood with high probability which includes correct edges but no spurious connections (see Section \ref{sec:nodewise-graphical-model}), we want the under and correct fit to be as large as possible and the over fit as small as possible. To look into more detail of over fitting, we also consider the false positive rate, defined as
\begin{align*}
\text{fasle positive rate} &:= \frac{1}{R}\sum_{r=1}^R \frac{| \hat{S} \cap S^c |}{ | S^c|},
\end{align*}
where $S^c$ is the complement of the support, the true negatives or zero edges. This measure gives an indication of the rate of selecting too many edges in a neighbourhood in the graph (over fitting).

\subsection{Results}\noindent
We start with the impact of possible neighbourhood size ($p/n = 0.1$ up to 10) on MSE. Figure \ref{fig:mse-correct-missp}(a) shows the MSE for a correctly specified linear model, and for a misspecified linear model in Figure \ref{fig:mse-correct-missp}(b). The test MSE in (a) remains above the optimal MSE at the correct neighbourhood size with $p/n\approx 0.15$. This implies that standard model selection with MDL-S or the AIC should work in the high-dimensional setting. However, in Figure \ref{fig:mse-correct-missp}(b), when the model is misspecified, the test MSE drops below the MSE at the optimal solution, indicating that standard neighbourhood selection need not work. Since the MSE is lower at models with a large number of parameters, neighbourhood selection could favour models with a large number of parameters. 
\begin{figure}
\begin{tabular}{@{\hspace{-1em}} c @{\hspace{1em}} c @{\hspace{0em}} c}
\vspace{-3em}
\textsf{}		&\textsf{correct model}			&\textsf{misspecified model}\\[0em]
\raisebox{8em}{\textsf{}}		&\pgfimage[width=.5\textwidth]{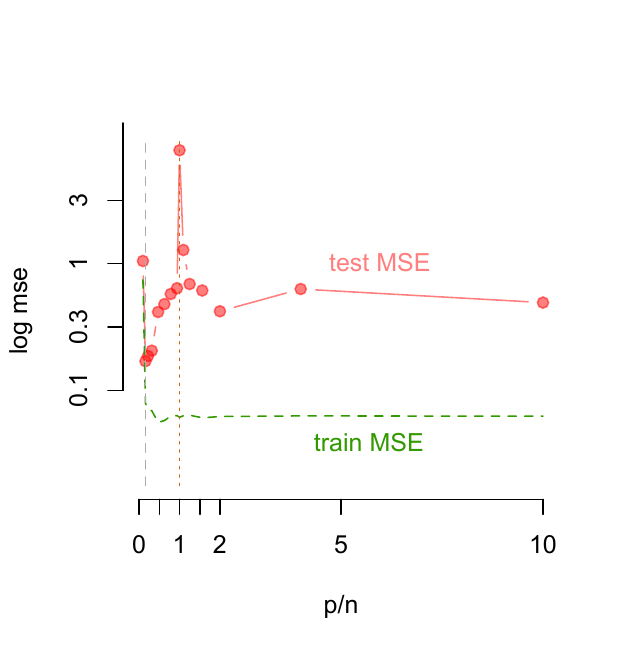}	&\pgfimage[width=.5\textwidth]{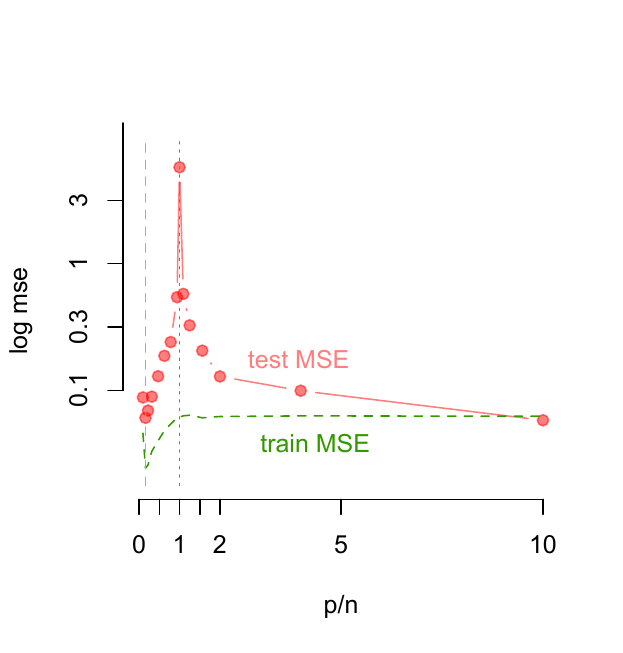}\\[-1em]
	&(a)		&(b)
\end{tabular}
\caption{The MSE for correctly specified (a) and misspecified model (b) for a neighbourhood with $p$ edges up to 320, so that $p/n$ ranges between 0.1 and 10. For the true linear model (a) the global minimum of the test MSE remains the correct model with $p=5$, approximately at $p/n\approx 0.15$. For the misspecified model (b), however, we see that the global minimum has now shifted from the correct neighbourhood size with $p=5$ and MSE 0.0877 to the model with $p/n=10$ ($=320/32$) and MSE 0.0844. }
\label{fig:mse-correct-missp}
\end{figure}
%


The consequences for neighbourhood selection are shown in Figure \ref{fig:proportion-correct-missp}. The figure shows under fit ($\hat{S}\subset S$, light blue), correct fit ($\hat{S}=S$, blue) and over fit ($\hat{S} \supset S$, red) in each subfigure. The left column of Figure \ref{fig:proportion-correct-missp} shows the results for when the data is correctly specified as linear, while the right column shows the results for when the model is misspecified; both are estimated with a linear model. In the bottom row, neighbourhood selection is performed when $p_{\max} = 15$, and increases to $p_{\max}=32$ in the middle and $320$ in the top row. When the model is correctly specified (left column) and the dimension is low ($p_{\max}=15$), neighbourhood selection in graphical models works reasonably well in that either the correct support is obtained, $\hat{S}=S$ or it is underestimated, i.e., $\hat{S}\subset S$. Only the AIC and the Lasso show increased false positive rates, i.e., $\hat{S}\supset S$. Increasing the dimension to $p_{\max}=320$ exacerbates the over fit for AIC and Lasso (also Lasso-ds). In contrast, MDL, MDL-opt, MDL-S and AIC-CV appear to remain accurate in the sense that over fit remains low, and over fit is absent in the case of MDL. For MDL this is because of the large penalty on using a large dimensional model space described in Section \ref{sec:minimum-description-length}, in line with Proposition \ref{prop:mdl-underfit}.
\begin{figure}
\begin{tabular}{@{\hspace{-1em}} c @{\hspace{1.5em}} c @{\hspace{0em}} c}
\textsf{}		&\textsf{correct model}			&\textsf{misspecified model}\\[0em]
\raisebox{8em}{\textsf{}}		&\pgfimage[width=.5\textwidth]{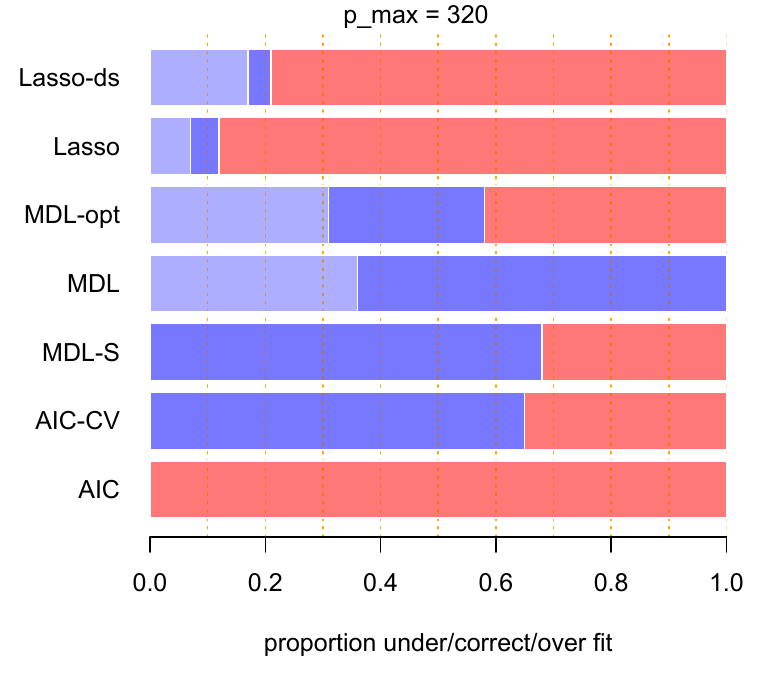}		&\pgfimage[width=.47\textwidth]{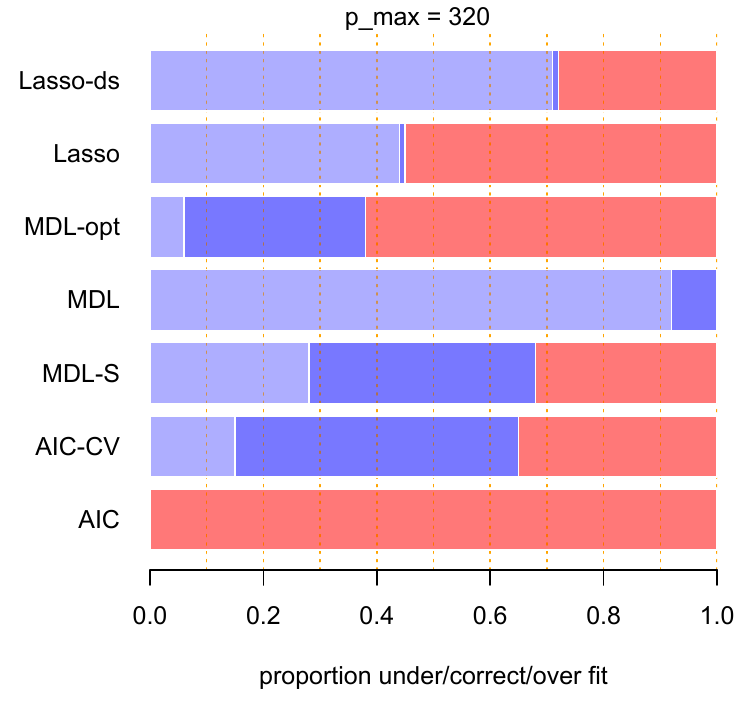}\\[-1em]
\raisebox{8em}{\textsf{}}		&\pgfimage[width=.5\textwidth]{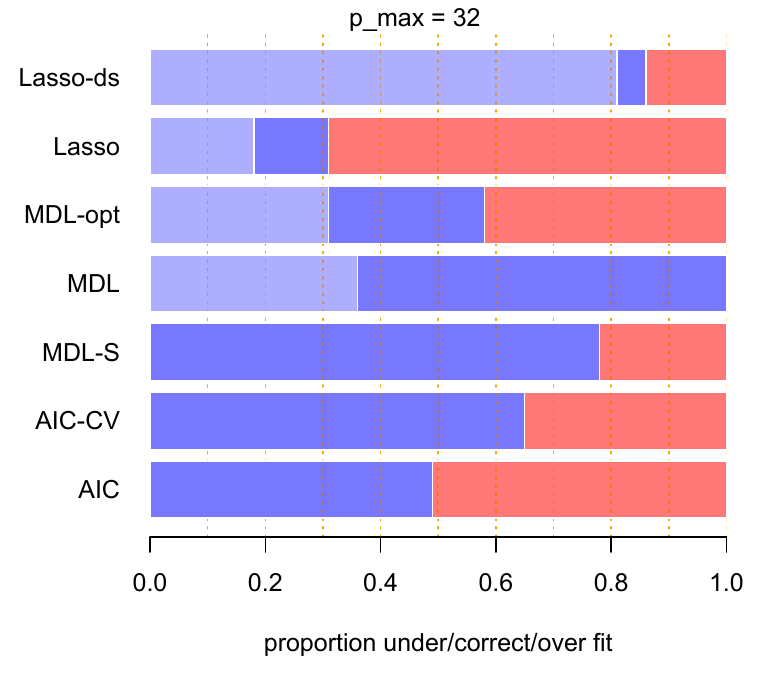}		&\pgfimage[width=.47\textwidth]{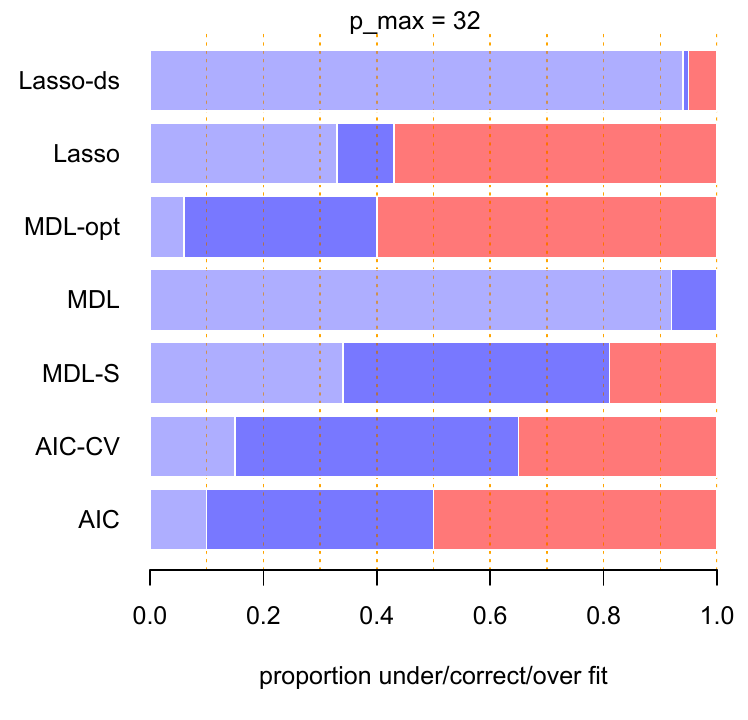}\\[-1em]
\raisebox{8em}{\textsf{}}		&\pgfimage[width=.5\textwidth]{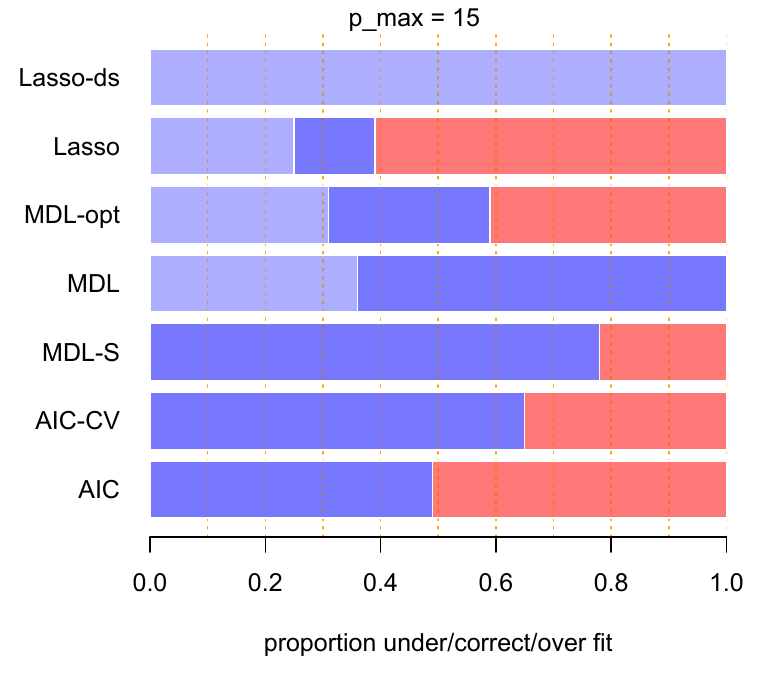}		&\pgfimage[width=.47\textwidth]{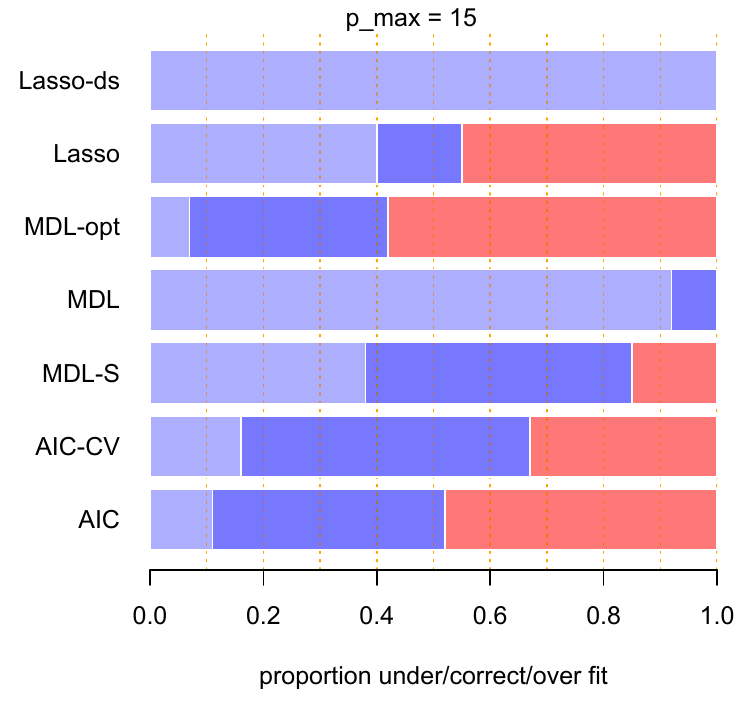}\\
\end{tabular}
\caption{Proportion of decisions: underestimating (light blue), correct (blue), and overestimating (red) the dimension of the model for both the correct (linear) and incorrect (logistic) data generating model for different maximum number of predictors $p_{\max} =15, 32$ or $320$ in each of the respective rows, such that $p/n=0.47$, 1 and 10, respectively. In both the linear and non-linear model, there are $p=5$ non-zero coefficients. In the left column the data were generated and estimated with the linear model; in the right column, data were generated with the non-linear model while they were estimated with the linear model.  }
\label{fig:proportion-correct-missp}
\end{figure}

The pattern of over fit remains relatively similar when the model is misspecified (right column of Figure \ref{fig:proportion-correct-missp}), but the probability of selecting the correct neighbourhood is decreased for all methods. Only MDL does not select too many edges (no false positives), which is in line with theory as discussed in Section \ref{sec:minimum-description-length} and Appendix \ref{sec:appendix-mdl-underfit}.

Finally, we consider the false positive rate, the probability of including too many edges in the graph. Figure \ref{fig:false-positive-rate}(a) shows the false positive rate for each dimension in the simulation $p/n=0.1$ up to $10$, when the model is correctly specified as linear.
\begin{figure}
\begin{tabular}{@{\hspace{-1em}} c @{\hspace{1em}} c @{\hspace{0em}} c}
\textsf{}		&\textsf{correct model}			&\textsf{misspecified model}\\[0em]
\raisebox{8em}{\textsf{}}		&\pgfimage[width=.5\textwidth]{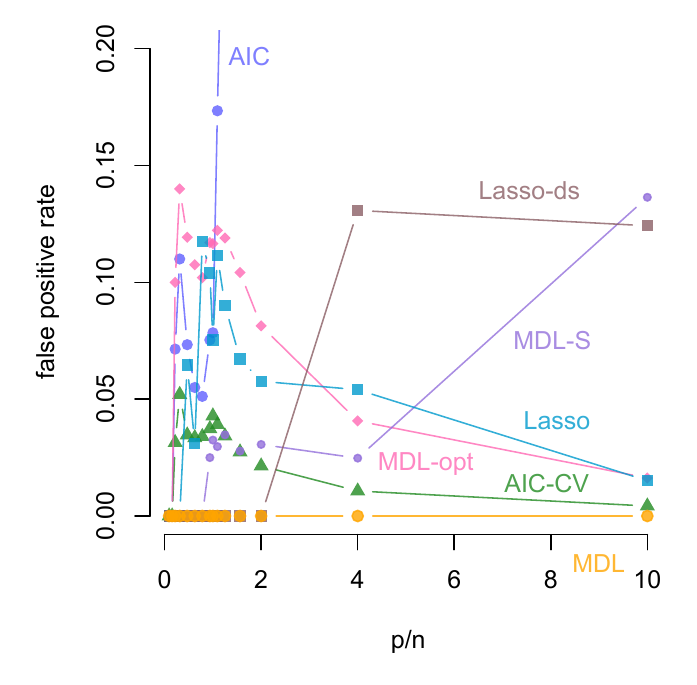}		&\pgfimage[width=.5\textwidth]{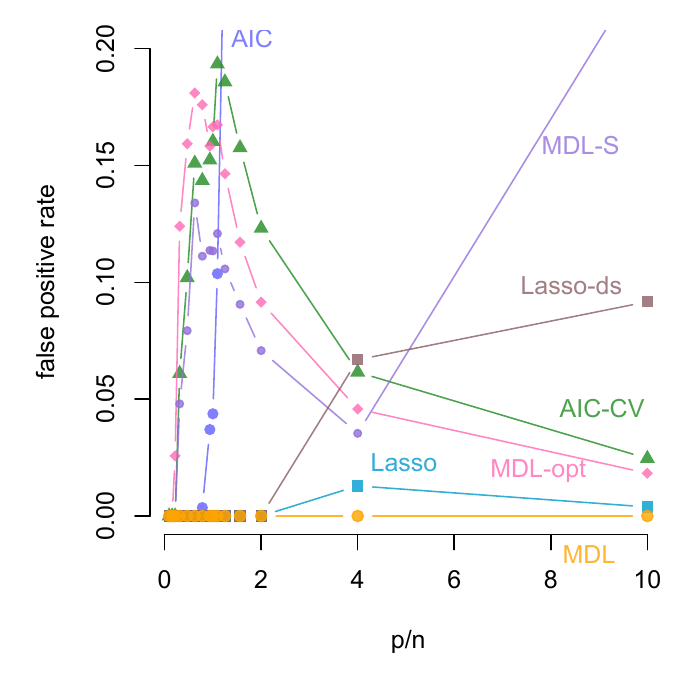}\\[-1em]
	&(a)		&(b)
\end{tabular}
\caption{False positive rate for each method. In (a) the model is correctly specified as linear and in in (b) the model is misspecified as linear while the conditional mean is a logistic function. }
\label{fig:false-positive-rate}
\end{figure}
MDL stands out in that no matter the model dimension, the false positive rate remains at 0. This corresponds to the results in the left column of Figure \ref{fig:proportion-correct-missp}. Most other methods have a slightly elevated false positive rate; the AIC tends to lead to a neighbourhood that is too large from around the interpolation point ($p/n=1$) on. When the assumption of a true linear model no longer holds, the false positive rate increases for almost all methods, except Lasso and MDL. Both MDL and Lasso have low false positive rate for any ratio of $p/n$.

\section{Conclusion and discussion}\label{sec:conclusions}\noindent
We considered nodewise selection of edges in the Gaussian graphical model when the number of nodes exceeds the number of observations. This is an issue because regular estimation techniques like least squares cannot work in such situations. We used the ridge regression estimate to solve this issue, leading to a biased but closed form estimate. We were then able to show the impact of the ridge parameter on the mean squared error, and concluded that indeed, when the true relation between edges is linear, the MSE is lowest at the correct neighbourhood. We also showed that in the misspecified case, the MSE is co-determined by an additional term in the MSE caused by the misspefication of a nonlinear model by a linear model. In addition, the ridge parameter could be interpreted as the inverse of the signal-to-noise-ratio. In low signal-to-noise-ratio situations the ridge parameter will be high, leading to a larger ridge penalty. This penalty was then seen to reduce the parameter variance and, hence, implies that the generalisation can still be reasonable, suggesting generalisation to other samples is possible. This also (partly) explains why some highly overparameterised machine learning techniques are able to predict well, mimicking regularised estimation by restricting the class of functions in empirical risk minimisation \citep{Hastie:2022}. Generally, selecting the ridge parameter using cross-validation seems good practice, since it reduced the sensitivity of the MSE and neighbourhood selection improved.

We next applied nodewise selection to ascertain whether it is possible to obtain the correct neighbourhood or smaller, but not larger (i.e., no false positives). We proved that in the high-dimensional setting MDL can achieve this objective of correct or smaller neighbourhood selection with high probability, in both the case when true edge relations are linear or nonlinear. The reason is that it explicitly takes into account the volume of the parameter space to counteract the low MSE at high dimensions. Simulations confirmed these results, and also showed that other neighbourhood selection methods (using AIC, for example) lead to neighbourhoods that are too large. Obviously, a low false positive rate and under fit for MDL, lead to a somewhat elevated false negative rate (too few correct edges). Considering the discussion in Section \ref{sec:nodewise-graphical-model}, we believe that, even though some edges will not be detected, this is preferable to the case when too many edges are selected. Therefore, we recommend using MDL for nodewise selection in the high-dimensional setting for Gaussian graphical modelling. 

While the result that MDL leads to low false positives is geared towards the application of the high-dimensional setting in graphical modelling, it is more general. We used the framework of nodewise regression, and so it is valid within the context of regression. But we used the fact that there is a bound on the misspecification for the conditional mean and that the residual variance is bounded away from 0. Given these settings, the result on MDL can be applied to other regression settings. 

\section*{Acknowledgements}\noindent
The research conducted by L. Waldorp has been supported by a grant from the
ERC (ERC project 101053880) and by the “New Science of Mental Disorders”, the Dutch
Research Council and the Dutch Ministry of Education, Culture and Science (NWO), grant number 024.004.016.


\appendix 
\vspace{2em}\noindent
\textbf{Appendix}
%
\appendix 
\vspace{2em}\noindent

\section{Gaussian graphical model}\label{sec:appendix-gaussian-graphical-model}\noindent
The Gaussian graphical model (GGM) is characterised by the correspondence between a network of $p$ nodes, labeled $j=1,2,\ldots, p$ and a set of random variables $X_1,X_2,\ldots, X_p$ that are jointly Gaussian (multivariate normally) distributed \citep{Koller:2009}. The edges in a GGM correspond to conditional dependence between nodes (given all reamining variables). The Gaussian distribution is completely described by its means $\mu_j$ and covariances $\sigma_{ij}$. The inverse of the covariance matrix $\Sigma=(\sigma_{ij},i,j=1,2,\ldots,p)$, denoted by $\Sigma^{-1}=\Theta$, contain the partial covariances. The partial covariance $\theta_{ij}$ is defined as the covariance between the residuals of nodes $i$ and $j$ when all other variables (not $i$ and $j$) have been regressed out \citep{Lauritzen96}. In a GGM a partial covariance of 0 is equal to conditional independence. We show this by example. 

Consider three Gaussian variables with mean 0 and covariance matrix $\Sigma$. We write the density of the three variables as $f_{123}$ and the conditional density as $f_{13\mid 2}$ for the joint density of variables $X_1$ and $X_3$ given $X_2$. We will want to come to the conclusion that the product of the conditional distributions $f_{1\mid 2}f_{3\mid 2}$ is the same as the conditional distribution $f_{13\mid 2}$.  To do this we consider an example with covariance and inverse covariance matrix, respectively, 
\begin{align*}
\Sigma = 
\begin{pmatrix}
3/4	&-1/2	&1/4\\[-.5em]
-1/2	&1		&-1/2\\[-.5em]
1/4	&-1/2	&3/4
\end{pmatrix}
\qquad\text{and}\qquad
\Theta
 = 
\begin{pmatrix}
2\phantom{m}	&1\phantom{m}	&0\\[-.5em]
1\phantom{m}	&2\phantom{m}	&1\\[-.5em]
0\phantom{m}	&1\phantom{m}	&2
\end{pmatrix}.
\end{align*}
Note the 0 at $\theta_{13}=\theta_{31}$ such that variables $X_1$ and $X_3$ are conditionally independent given $X_2$. In terms of the distribution this implies the following. So this implies that we have the density $f_{13\mid 2}$. We now compute this from the density of thew corresponding conditional Gaussian distribution. The distribution with $\Theta$ is
\begin{align*}
f_{123}(x) = \frac{1}{\sqrt{(2\pi)^3}}\sqrt{\det(\Theta)} \exp\left( -\tfrac{1}{2} x^\top \Theta x \right),
\end{align*}
where the term in the exponential is
\begin{align*}
x^\top \Theta x = 2x_1^2 + 2x_2^2 + 2x_3^2 + 2 x_1x_2 + 2x_2x_3.
\end{align*}
Note that because $\theta_{13}=0$ there is no term $2x_1x_3$ and that is why we can rewrite the density. If we take the density of $X_1$ and $X_2$, using the first two rows and columns of $\Theta$ and calling that $\Theta_{12}$, then we obtain
\begin{align*}
f_{12}(x_1,x_2) = \frac{1}{2\pi} \det(\Theta_{12})\exp\left( -\tfrac{1}{2}(2x_1^2 + 2x_1x_2 +2x_2^2)\right).
\end{align*}
And similarly for the density of $X_2$ and $X_3$ with $\Theta_{23}$ the second and third rows and columns of $\Theta$, gives
\begin{align*}
f_{23}(x_2,x_3) = \frac{1}{2\pi} \det(\Theta_{23})\exp\left( -\tfrac{1}{2}(2x_2^2 + 2x_2x_3 +2x_3^2)\right).
\end{align*}
The density of only $X_2$, ignoring the other two variables is
\begin{align*}
f_{2}(x_2) = \frac{1}{\sqrt{2\pi}} \Theta_{22}\exp\left( -\tfrac{1}{2}2x_2^2\right).
\end{align*}
Then we find (after some algebra) that
\begin{align*}
\frac{f_{12}f_{23}}{f_{2}}=f_{123}
\quad \Longleftrightarrow \quad
\frac{f_{12}f_{23}}{f_{2}f_2}=f_{1\mid 2}f_{3\mid 2}=f_{13\mid 2}.
\end{align*}
And we see that if a conditional covariance $\theta_{ij}=0$, then there is a conditional independence. This is true only for the multivariate normal distribution.

Hence, the GGM is particularly attractive for graphical models because whenever the partial correlation between variables $X_i$ and $X_j$ is 0, then $X_i$ and $X_j$ are conditionally independent given all other variables \citep[][Proposition 5.2]{Lauritzen96}. Hence, an edge in the GGM is present if and only if the partial correlation is non-zero. 
The partial correlation is a function of the inverse covariance matrix and has elements $\theta_{ij}$. The partial correlation between $X_i$ and $X_j$ is defined by \citep{Koller:2009,Epskamp:2018b}
\begin{align*}
\rho_{ij\mid \star} = -\frac{\theta_{ij}}{\sqrt{\theta_{ii}\theta_{jj}}},
\end{align*}
where $\star$ indicates all remaining variables in the network except $i$ and $j$. If $\rho_{ij\mid \star}=0$, then no edge should be present between nodes $i$ and $j$ in the network. A GGM can be interpreted as a network where a correlation corresponds to a (set of) path(s) between two nodes \citep{Jones:2005}.

We can determine whether a partial correlation also by considering the regression coefficients $\beta_{ij}$ from the regression 
\begin{align*}
X_i = \sum_{j\ne i} X_{j}\beta_{ij} + e_i,
\end{align*}
where the sum is over all nodes $j$ that are not $i$. The reason for the correspondence between the partial correlation and the regression coefficient is that \citep{Lauritzen96}
\begin{align*}
\beta_{ij} = -\frac{\theta_{ij}}{\theta_{ii}}.
\end{align*}
Hence, whenever $\beta_{ij}=0$ then also $\theta_{ij}=0$ and vice versa. We can, therefore, consider each node in turn and determine the non-zero coefficients of the regressions. Because we have both $\beta_{ij}$ and $\beta_{ji}$ we will have to decide to use the \textit{and}-rule, where both $\beta_{ij}$ and $\beta_{ji}$ are non-zero to include the edge $(i,j)$, or the \textit{or}-rule, where either $\beta_{ij}$ or $\beta_{ji}$ is non-zero to include the edge $(i,j)$ \citep{Meinshausen:2006}.

\section{Excess risk and mean squared error (Section \ref{sec:mse-decomposition})}\label{sec:appendix-mean-squared-error}\noindent
We defined the excess risk as
\begin{align*}
R(J)	
&=\mathbb{E}\left[ (Y_0-X_0^\top\hat{\beta}_{J} )^2 - (Y_0 - X_0^\top \beta_S)^2\right]
\\
&=\mathbb{E}\left[ (Y_0-X_0^\top\hat{\beta}_{J} )^2\right] - \mathbb{E}\left[(Y_0 - X_0^\top \beta_S)^2\right].
\end{align*}
where the expectation is conditional on the training data $X$. This is equivalent to the risk defined in \citet{Hastie:2022} and \citet{Bartlett:2021}.
Then we obtain the following decomposition.
%
\begin{lemma}{\rm (Mean squared error decomposition)}\label{lem:mse-decomposition}
Assume that the linear model (\ref{eq:linear-model}) is correct. The excess risk (test mean squared error) can be decomposed as $R(J)=B(J)+V(J)$ in (\ref{eq:risk}) with
\begin{align*}
B(J) = (\beta_S -\mathbb{E}\hat{\beta}_J)^\top \Sigma (\beta_S -\mathbb{E}\hat{\beta}_J), 
\qquad
V(J) =  \mathbb{E}(\hat{\beta}_J-\mathbb{E}\hat{\beta}_J)^\top \Sigma (\hat{\beta}_J-\mathbb{E}\hat{\beta}_J),
\end{align*}
where $B(J)$ is called the squared bias and $V(J)$ is called the variance.
\end{lemma}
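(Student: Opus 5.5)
We expand both squared prediction errors around the true regression function $X_0^\top\beta_S$, using the linear model (\ref{eq:linear-model}) for the test point, $Y_0 = X_0^\top\beta_S + e_0$. Here $e_0$ has mean zero and variance $\sigma^2$, and it is independent of $X_0$ and of the training data. Throughout, $\hat{\beta}_J$ is regarded as a vector in $\mathbb{R}^p$ with zeros outside $J$, so that all inner products are taken in the same space as $\beta_S$.

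The first step is to write
\begin{align*}
Y_0 - X_0^\top\hat{\beta}_J = e_0 + X_0^\top(\beta_S-\hat{\beta}_J).
\end{align*}
Squaring this and subtracting $(Y_0-X_0^\top\beta_S)^2=e_0^2$ gives
\begin{align*}
R(J) = \mathbb{E}\left[(X_0^\top(\beta_S-\hat{\beta}_J))^2\right] + 2\,\mathbb{E}\left[e_0\, X_0^\top(\beta_S-\hat{\beta}_J)\right].
\end{align*}
The cross term vanishes. The reason is that $\hat{\beta}_J$ depends only on the training data, and $e_0$ is independent of both $X_0$ and the training data. Conditioning on $X_0$ and the training data therefore factors out $\mathbb{E}e_0=0$.

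The second step handles the remaining quadratic term. Since $X_0$ is independent of $\hat{\beta}_J$, we condition on $\hat{\beta}_J$ and use $\mathbb{E}(X_0X_0^\top)=\Sigma$. This gives $\mathbb{E}[(\beta_S-\hat{\beta}_J)^\top\Sigma(\beta_S-\hat{\beta}_J)]$, where the remaining expectation is over the randomness in $\hat{\beta}_J$. Because the expectation is conditional on the training design $X$, that randomness comes only from the training noise.

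The third step is the usual add-and-subtract. We write $\beta_S-\hat{\beta}_J=(\beta_S-\mathbb{E}\hat{\beta}_J)-(\hat{\beta}_J-\mathbb{E}\hat{\beta}_J)$ and expand the quadratic form. The first piece is deterministic given $X$ and yields $B(J)$. The second piece yields $V(J)$. The cross term is $-2(\beta_S-\mathbb{E}\hat{\beta}_J)^\top\Sigma\,\mathbb{E}(\hat{\beta}_J-\mathbb{E}\hat{\beta}_J)$, which is zero.

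The main point needing care is bookkeeping about which expectation is meant. The expectation "conditional on $X$" must still average over the training noise $e$, since otherwise $\mathbb{E}\hat{\beta}_J=\hat{\beta}_J$ and $V(J)$ would vanish trivially. We will therefore state explicitly that $\mathbb{E}$ integrates over $(X_0,Y_0)$ and over $e$ with $X$ held fixed. For the ridge estimator this gives $\mathbb{E}\hat{\beta}_J=(X^\top X+\alpha I)^{-1}X^\top X\beta_S$. Under this convention the independence of $(X_0,e_0)$ from $e$ justifies every factorisation above, and no further assumptions are required.
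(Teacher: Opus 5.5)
Your proof is correct and takes essentially the same route as the paper: expand around $X_0^\top\beta_S$, add and subtract $\mathbb{E}\hat{\beta}_J$, and remove the cross term using independence of the test point from the training data together with $\mathbb{E}(\hat{\beta}_J-\mathbb{E}\hat{\beta}_J)=0$. You also handle two points the paper's proof passes over quickly: you show explicitly that the noise cross term $2\,\mathbb{E}[e_0X_0^\top(\beta_S-\hat{\beta}_J)]$ vanishes, and you state the conditioning convention (fixed $X$, averaging over the training noise), without which $V(J)$ would be trivially zero.
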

\begin{proof}
Assuming that the linear model is correct, so that $\mathbb{E}Y_0-X_0^\top\beta_S=0$, we get
\begin{align*}
R(J) 
&=\mathbb{E}\left[ (Y_0-X_0^\top\beta_S+X_0^\top(\beta_S-\hat{\beta}_{J}) )^2\right] - \mathbb{E}\left[(Y_0 - X_0^\top \beta_S)^2\right]
\\
& = \mathbb{E}\left( X_0^\top(\beta_S -\mathbb{E}\hat{\beta}_J) - X_0^\top(\hat{\beta}_J-\mathbb{E}\hat{\beta}_J)\right)^2 
\end{align*}
This gives 
\begin{align*}
R(J) 
& = \mathbb{E}\left( X_0^\top(\beta_S -\mathbb{E}\hat{\beta}_J)\right)^2 + \mathbb{E}\left(X_0^\top(\hat{\beta}_J-\mathbb{E}\hat{\beta}_J)\right)^2 \\
&\quad
- 2\mathbb{E} (\beta_S -\mathbb{E}\hat{\beta}_J)^\top X_0 X_0^\top (\hat{\beta}_J-\mathbb{E}\hat{\beta}_J)
\end{align*}
Recalling that $(X,Y)$ and $(X_0,Y_0)$ are independent, we obtain for the cross product
\begin{align*}
&\mathbb{E} (\beta_S -\mathbb{E}\hat{\beta}_J)^\top X_0 X_0^\top (\hat{\beta}_J-\mathbb{E}\hat{\beta}_J)
= (\beta_S -\mathbb{E}\hat{\beta}_J)^\top \Sigma \mathbb{E} (\hat{\beta}_J-\mathbb{E}\hat{\beta}_J) = 0,
\end{align*}
because $\mathbb{E} (\hat{\beta}_J-\mathbb{E}\hat{\beta}_J) = 0$, and where $\Sigma  =\mathbb{E}(X_0 X_0^\top)$, which is the same as the covariance matrix of the predictors $\mathbb{E}(X^\top X)$. Then we obtain $B(J)$ and $V(J)$ in (\ref{eq:bias-variance}) from the two remaining terms.
\end{proof}
This is the famous variance and squared bias decomposition \citep{Hastie:2022}.

Next, we show the formulation of the bias and variance of the MSE with the ridge estimator $\hat{\beta}=W(\alpha)X^\top Y$, where $W(\alpha)=(X^\top X+\alpha I)$. 
\begin{lemma}\label{lem:bias-eigen-values}
Assume that the linear model (\ref{eq:linear-model}) is true and that $\Sigma=\sigma_\xi^2 I$ is the covariance matrix of the covariates. Then  
 the squared bias is 
\begin{align*}
B(J) = \sigma^2_\xi\frac{\gamma_1\alpha^2}{(\lambda_1+\alpha)^2}+ \sigma^2_\xi\sum_{j=2}^n\frac{\alpha^2}{(\lambda_j+\alpha)^2}.
\end{align*}
 \end{lemma}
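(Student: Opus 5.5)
The plan is to start from the definition of $B(J)$ in Lemma \ref{lem:mse-decomposition} and compute $\mathbb{E}\hat{\beta}_J$ explicitly for the ridge estimator. Under the linear model (\ref{eq:linear-model}), conditioning on the training design $X$ gives $\mathbb{E}\hat{\beta}_J = (X^\top X+\alpha I)^{-1}X^\top X\beta_S$. Hence
\begin{align*}
\beta_S-\mathbb{E}\hat{\beta}_J = \bigl(I-(X^\top X+\alpha I)^{-1}X^\top X\bigr)\beta_S = \alpha (X^\top X+\alpha I)^{-1}\beta_S.
\end{align*}
With $\Sigma=\sigma_\xi^2 I$, the bias then becomes $B(J)=\sigma_\xi^2\alpha^2\,\beta_S^\top (X^\top X+\alpha I)^{-2}\beta_S$.

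Next I would diagonalise. Write $X^\top X = U\Lambda U^\top$ with eigenvalues $\lambda_j$, so that $(X^\top X+\alpha I)^{-2}=U(\Lambda+\alpha I)^{-2}U^\top$. Then
\begin{align*}
B(J)=\sigma_\xi^2\,\mathrm{tr}\bigl(\alpha^2(\Lambda+\alpha I)^{-2}U^\top\beta_S\beta_S^\top U\bigr).
\end{align*}
This rewrites the bias in terms of the rank-one matrix $\beta_S\beta_S^\top$, whose single non-zero eigenvalue is $\gamma_1=\|\beta_S\|_2^2$. To reach the stated form, I would take the first eigenvector of $X^\top X$ aligned with $\beta_S/\|\beta_S\|_2$, so that $U^\top\beta_S\beta_S^\top U=\mathrm{diag}(\gamma_1,0,\ldots,0)$. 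That yields the leading term $\sigma_\xi^2\gamma_1\alpha^2/(\lambda_1+\alpha)^2$.

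The main obstacle is the remaining sum $\sigma_\xi^2\sum_{j=2}^n\alpha^2/(\lambda_j+\alpha)^2$. It does not come from $\beta_S\beta_S^\top$ alone, which has only one non-zero eigenvalue. Instead it must come from treating the quadratic form in a trace form that includes an identity contribution in the remaining eigen-directions. One route is to interpret $\beta_S\beta_S^\top$ as a matrix with $\gamma_1$ in the first direction and unit weight in the other directions, in the spirit of the isotropic-prior computations of \citet{Hastie:2019}, where $\mathbb{E}\beta\beta^\top\propto I$.

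I would try this averaged (random-$\beta$) interpretation first. I would make the assumption explicit, namely that $\beta_S\beta_S^\top$ is replaced by $\gamma_1 u_1u_1^\top+\sum_{j\ge2}u_ju_j^\top$. The sum runs only to $n$ because, for $p>n$, $X^\top X$ has rank at most $n$. So I would restrict attention to the $n$ directions spanned by the data, and note that the null-space directions (where $\lambda_j=0$) are excluded from the sum. Finally, I would check the limiting cases $\alpha\to 0$, where the bias vanishes on the range, and $\alpha\to\infty$, where the terms tend to $\sigma_\xi^2(\gamma_1+n-1)$, as a sanity test.
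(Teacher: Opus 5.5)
Your first steps are correct and coincide with the paper's: $\beta_S-\mathbb{E}\hat{\beta}_J=\alpha W(\alpha)^{-1}\beta_S$, so $B(J)=\sigma_\xi^2\alpha^2\beta_S^\top W(\alpha)^{-2}\beta_S$. Diagonalising $X^\top X=U\Lambda U^\top$ then gives the exact identity
\[
B(J)=\sigma_\xi^2\sum_{j=1}^{p}\frac{\alpha^2\,(u_j^\top\beta_S)^2}{(\lambda_j+\alpha)^2}.
\]
The gap is the passage from this identity to the displayed formula. Your proposed fix does not close it; it replaces the hypothesis.

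\begin{itemize}
\item Aligning $u_1$ with $\beta_S$ is an extra assumption. Once you make it, $\sum_j(u_j^\top\beta_S)^2=\|\beta_S\|_2^2=\gamma_1$ forces $u_j^\top\beta_S=0$ for every $j\ge 2$. The second sum is then exactly zero, not $\sum_{j\ge 2}\alpha^2/(\lambda_j+\alpha)^2$.
\item Substituting $\gamma_1u_1u_1^\top+\sum_{j\ge2}u_ju_j^\top$ for $\beta_S\beta_S^\top$ proves a different lemma, about a different (random, $X$-dependent) coefficient vector. It is not even the isotropic-prior setting you invoke, where $\mathbb{E}\beta\beta^\top$ is a multiple of $I$ with equal weights in every direction and no separate $\gamma_1$ term.
\item Truncating the sum at $n$ is also unjustified. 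For $p>n$ the null-space directions have $\lambda_j=0$, so they contribute $\sigma_\xi^2(u_j^\top\beta_S)^2$, with weight one, not zero. They carry the part of $\beta_S$ that $X$ cannot see, which remains in the bias even as $\alpha\to0$.
\end{itemize}

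Your own sanity check exposes the problem. As $\alpha\to\infty$ the ridge estimate tends to $0$, so $B(J)\to\sigma_\xi^2\|\beta_S\|_2^2=\sigma_\xi^2\gamma_1$, which the exact identity reproduces. The target formula instead tends to $\sigma_\xi^2(\gamma_1+n-1)$, which exceeds the bias of the zero estimator. So the missing step cannot be supplied from the stated hypotheses.

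The paper's proof takes the same route as yours. It writes $B(J)=\sigma_\xi^2\,\text{tr}\{(I-W(\alpha)^{-1}X^\top X)^2\beta_S\beta_S^\top\}$, substitutes $W(\alpha)^{-2}=U\Lambda(\alpha)^{-2}U^\top$, and asserts the result. It glosses over exactly the point you flagged: a rank-one $\beta_S\beta_S^\top$ cannot produce unit weights in directions $2,\ldots,n$.

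What you can actually prove is the exact identity above. It collapses to the first term alone when $\beta_S\parallel u_1$. Any version containing the extra sum needs a new hypothesis stated up front, not an ``interpretation'' of $\beta_S\beta_S^\top$ introduced mid-proof.
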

\begin{proof}
Recall that the expectation is conditional on $X$ and $(X_0,Y_0)$ and $(X,Y)$ are independent. Plugging in the ridge estimator (\ref{eq:ridge-estimator}) in the bias term and assuming $\Sigma=\sigma_\xi I$, yields
\begin{align*}
B(J) &= \mathbb{E}(\beta_S -\mathbb{E}\hat{\beta}_J)^\top X_0^\top X_0 (\beta_S -\mathbb{E}\hat{\beta}_J) \\
&=
\text{tr}( \beta_S - W(\alpha)^{-1}X^\top X\beta_S)( \beta_S - W(\alpha)^{-1}X^\top X\beta_S)^\top \mathbb{E}(X_0^\top X_0)
\\
&=
\text{tr}( (I - W(\alpha)^{-1}X^\top X)\beta_S)( (I - W(\alpha)^{-1}X^\top X)\beta_S)^\top \Sigma
\\
&=
\sigma^2_\xi \text{tr}( I - W(\alpha)^{-1}X^\top X)^2\beta_S\beta_S^\top.
\end{align*}
Let $\lambda_j$ be the eigenvalues of $X^\top X$ and $\gamma_1$ be the only non-zero eigenvalue of $\beta_S\beta_S^\top$. Then, noting that $U^\top U=I$, we obtain for $W(\alpha)=X^\top X+\alpha I$ the eigenvalue decomposition for $W(\alpha)$ is $U\Lambda(\alpha) U^\top$ with diagonal matrix $\Lambda(\alpha)$ with elements $\lambda_j + \alpha $. The squared  inverse $W(\alpha)^{-2}$ is then $U\Lambda(\alpha)^{-2} U^\top$ with diagonal matrix $\Lambda(\alpha)^{-2}$ with elements $1/(\lambda_j+\alpha)^2$. Plugging this in the above result, we obtain $B(J)$. This completes the proof.
\end{proof}
%
%
\begin{lemma}\label{lem:variance-eigen-values}
Assume that the linear model (\ref{eq:linear-model}) is true and that $\Sigma=\sigma_\xi^2 I$ is the covariance matrix of the covariates. Then  
the variance is 
\begin{align*}
V(J) = \sigma_\xi^2\sigma^2_e \sum_{j=1}^n \frac{\lambda_j} {(\lambda_j + \alpha)^2}.
\end{align*}

\end{lemma}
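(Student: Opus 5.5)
The plan is to prove Lemma \ref{lem:variance-eigen-values} directly, mirroring the proof of Lemma \ref{lem:bias-eigen-values}: write $\hat{\beta}_J-\mathbb{E}\hat{\beta}_J$ explicitly, reduce $V(J)$ to a trace, and diagonalise. All expectations are conditional on the training design $X$. Under the linear model (\ref{eq:linear-model}) we have $Y=X\beta_S+e$, so with $W(\alpha)=X^\top X+\alpha I$ the ridge estimator (\ref{eq:ridge-estimator}) is $\hat{\beta}_J=W(\alpha)^{-1}X^\top X\beta_S+W(\alpha)^{-1}X^\top e$. Since $\mathbb{E}e=0$, this gives
\begin{align*}
\hat{\beta}_J-\mathbb{E}\hat{\beta}_J=W(\alpha)^{-1}X^\top e .
\end{align*}

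Next, substitute this into $V(J)=\mathbb{E}(\hat{\beta}_J-\mathbb{E}\hat{\beta}_J)^\top\Sigma(\hat{\beta}_J-\mathbb{E}\hat{\beta}_J)$, using $\Sigma=\sigma_\xi^2 I$ (the isotropy assumption, entering through $\mathbb{E}(X_0X_0^\top)$ exactly as in Lemma \ref{lem:mse-decomposition}). The trace trick and $\mathbb{E}(ee^\top)=\sigma_e^2 I_n$ then give
\begin{align*}
V(J)=\sigma_\xi^2\,\mathbb{E}\,\text{tr}\big(W(\alpha)^{-1}X^\top e e^\top X W(\alpha)^{-1}\big)=\sigma_\xi^2\sigma_e^2\,\text{tr}\big(W(\alpha)^{-2}X^\top X\big).
\end{align*}
Here the symmetry of $W(\alpha)$ and cyclicity of the trace are used.

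For the diagonalisation, take the eigendecomposition $X^\top X=U\Lambda U^\top$ with $U^\top U=I$. Then $W(\alpha)=U(\Lambda+\alpha I)U^\top$ and $W(\alpha)^{-2}=U(\Lambda+\alpha I)^{-2}U^\top$, so all three matrices share eigenvectors, and
\begin{align*}
\text{tr}\big(W(\alpha)^{-2}X^\top X\big)=\sum_{j=1}^{p}\frac{\lambda_j}{(\lambda_j+\alpha)^2}.
\end{align*}
Since $\text{rank}(X^\top X)\le\min\{n,p\}$, at most $n$ eigenvalues are nonzero, and each zero eigenvalue contributes nothing (with $\alpha>0$ the denominator never vanishes). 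The sum therefore runs over $j=1,\ldots,n$ in the high-dimensional case, which gives the stated formula; when $p<n$ one reads the upper limit as $\min\{n,p\}$.

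There is no real obstacle. The only delicate point is bookkeeping: keeping track of which quantities are conditional on $X$ (so that $W(\alpha)$ is treated as fixed), and making sure the factor $\sigma_\xi^2$ comes from $\Sigma$ while $\sigma_e^2$ comes from the noise covariance.
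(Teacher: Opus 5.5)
Your proposal is correct and follows the paper's own argument: write $\hat{\beta}_J-\mathbb{E}\hat{\beta}_J=W(\alpha)^{-1}X^\top e$, reduce $V(J)$ to $\sigma_\xi^2\sigma_e^2\,\text{tr}\big(W(\alpha)^{-2}X^\top X\big)$, and diagonalise in the shared eigenbasis of $X^\top X$ and $W(\alpha)$. Your remark that the zero eigenvalues contribute nothing, so the sum effectively runs to $\min\{n,p\}$, makes explicit a point the paper leaves implicit.
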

\begin{proof}
Again, we note that the expectation is conditional on $X$ and $(X_0,Y_0)$ and $(X,Y)$ are independent. Plugging in the ridge estimator (\ref{eq:ridge-estimator}) in the variance term and assuming $\Sigma=\sigma_\xi I$, yields
\begin{align*}
V(J) &= \mathbb{E}\text{tr}(\hat{\beta}_J-\mathbb{E}\hat{\beta}_J)(\hat{\beta}_J-\mathbb{E}\hat{\beta}_J)^\top X_0^\top X_0
\\
&=\text{tr}(W(\alpha)^{-1}X^\top Y-W(\alpha)^{-1}X^\top X\beta_S)\\
	&\qquad \times (W(\alpha)^{-1}X^\top Y-W(\alpha)^{-1}X^\top X\beta_S)^\top \mathbb{E}(X_0^\top X_0)
\\
&=\text{tr}W(\alpha)^{-2}(X^\top (X\beta_S + e- X\beta_S))(X^\top (X\beta_S + e- X\beta_S))^\top \Sigma
\\
&=\sigma_\xi^2 \sigma^2_e\text{tr}(W(\alpha)^{-2} X^\top X),
\end{align*}
where we used that the variance of the residuals $e$ is $\sigma^2_e I$. Similar to Lemma \ref{lem:bias-eigen-values} we use that the eigenvalue decomposition for $W(\alpha)$ is $U\Lambda(\alpha) U^\top$ with diagonal matrix $\Lambda(\alpha)$ with elements $\lambda_j + \alpha $. The squared  inverse $W(\alpha)^{-2}$ is then $U\Lambda(\alpha)^{-2} U^\top$ with diagonal matrix $\Lambda(\alpha)^{-2}$ with elements $1/(\lambda_j+\alpha)^2$. Plugging this in the above results, we obtain $V(J)$. This completes the proof.
\end{proof}

\subsection{Model misspecification}\label{sec:appendix-misspecification}\noindent
Misspecification is defined with respect to a family of models. Let $\mathcal{M}$ denote a class of probability distributions that are indexed by parameters $\theta\in \Theta\subseteq \mathbb{R}^p$, i.e., $\mathcal{M}=\{ p_\theta:\theta\in \Theta \}$ \citep[we assume that the probability distribution is dominated by Lebesgue measure so that a density $p_\theta$ exists,][]{Jacod:2004}. An example of a model is  a class of normal distributions with mean $\mu$ and variance $\sigma^2$, where the mean is generated by a nonlinear function, such as $f:\mathbb{R}^p\times \mathbb{R}^p\to \mathbb{R}$ such that $x\mapsto 1/(1+\exp(-x^\top \beta))$; then $\theta=(\beta,\sigma^2)$ is the $p+1$ dimensional parameter vector. Furthermore, let $\mathcal{M}_{\sf lin}\subset \mathcal{M}$ be a strict subset consisting of only those distributions where the conditional mean $\mathbb{E}(Y\mid x)$ is generated by a linear function $x\mapsto x^\top\beta$. We say that a model is misspecified if the correct distribution is not in the class of probability distributions we consider in the optimisation  \citep{Vandervaart98,Grunwald:2007,Kleijn:2026}, i.e., $p_0\notin \mathcal{M}_{\sf lin}$, where $p_0=p_{\theta_0}$.  In our setting, this means that the distribution has a mean that is generated by a nonlinear function, while we use only a linear function for optimisation. In such a case, the linear model $x^\top \hat{\beta}$ can be interpreted as corresponding to a density $p_\theta$ that is closest within $\mathcal{M}_{\sf lin}$ to $p_0$ in $\mathcal{M}$ in terms of Kullback-Leibler divergence \citep{Vuong:1989,Vandervaart98}. Here we have suppressed the variance parameter $\sigma^2$ of the residuals $Y-f$, where $f$ is the true linear model for the conditional mean. However, seminal work by \citet{Grunwald:2017} shows that misspecification using linear models, as we do here, also leads to misspecification of the variance $\sigma^2$ (i.e., it depends on the value of $x$), and leads to non-convexity of the model space $\mathcal{M}_{\sf lin}$. This in turn leads to models that can be combinations of linear models (like with variance depending on $x$) that are in the convex hull of the model space $\mathcal{M}_{\sf lin}$ and are closer to the true generating model not in $\mathcal{M}_{\sf lin}$ than any model in $\mathcal{M}_{\sf lin}$ itself; \citet{Grunwald:2017} called this bad misspecification. 

We approximate a function $f$ by a linear function $X\beta$ with $\beta\in \mathbb{R}^p$ and $p>n$ very large. 
\begin{lemma}{\rm (Mean squared error for misspecification)}\label{lem:mse-misspecification}
Assume that the true function $f:\mathbb{R}^p\times \mathbb{R}^p\to \mathbb{R}$, and $Y=f(X,\beta_S)+e$, with $\beta_S$ the true parameter vector in $f$. We estimate $f(X,\beta_S)$ by $X\beta_J$ with $|J|=p>n$. Then the risk under model misspecification is 
\begin{align*}
R_f(J) &= \mathbb{E}\left[ (f(X_0,\beta_S) - \mathbb{E}(X_0^\top\hat{\beta}_J)^2 \right]
	+ \mathbb{E}\left[ (f(X_0,\beta_S) - \mathbb{E}(X_0^\top\hat{\beta}_J)^2 \right]\\
	&\quad
	+ 2\mathbb{E}\left[ (Y_0-f(X_0,\beta_S))(f(X_0,\beta_S)-X_0^\top\hat{\beta}_J) \right]
\end{align*}
which we refer to as $R_f(J)=B_f(J)+V_f(J)+M_f(J)$.
\end{lemma}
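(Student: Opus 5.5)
The plan is to imitate the proof of Lemma~\ref{lem:mse-decomposition}, but to centre at the true conditional mean $f(X_0,\beta_S)$ rather than at $X_0^\top\beta_S$. Throughout, expectations are conditional on the training data $(X,Y)$, and the test pair $(X_0,Y_0)$ is independent of it.

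Write
\[
Y_0-X_0^\top\hat{\beta}_J=(Y_0-f(X_0,\beta_S))+(f(X_0,\beta_S)-X_0^\top\hat{\beta}_J)
\]
and expand the square. This produces three terms:
\begin{itemize}
\item $\mathbb{E}[(Y_0-f(X_0,\beta_S))^2]$, which is the irreducible noise $\sigma^2$;
\item $\mathbb{E}[(f(X_0,\beta_S)-X_0^\top\hat{\beta}_J)^2]$;
\item the cross term $2\mathbb{E}[(Y_0-f(X_0,\beta_S))(f(X_0,\beta_S)-X_0^\top\hat{\beta}_J)]$, which is exactly $M_f(J)$.
\end{itemize}
The cross term is deliberately not set to zero. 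For the misspecified linear approximation the paper wants to keep it, since the residual $Y_0-f$ may interact with $x$ through heteroscedastic variance, as in the Gr\"unwald discussion above. The subtracted reference term $\mathbb{E}[(Y_0-X_0^\top\beta_S)^2]$ in the definition of the risk is treated as the noise baseline, so that it cancels the first term. If it does not cancel exactly, I would define $R_f$ relative to $f$; that is the reading consistent with the stated formula.

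Next, split the second term by adding and subtracting $\mathbb{E}(X_0^\top\hat{\beta}_J)$, where the inner expectation is over the training noise with $X_0$ held fixed:
\[
f-X_0^\top\hat{\beta}_J=(f-\mathbb{E}X_0^\top\hat{\beta}_J)-(X_0^\top\hat{\beta}_J-\mathbb{E}X_0^\top\hat{\beta}_J).
\]
Squaring gives $B_f(J)$ and $V_f(J)$ plus a cross term. That cross term vanishes because, conditional on $X_0$, the factor $f(X_0,\beta_S)-\mathbb{E}X_0^\top\hat{\beta}_J$ is deterministic with respect to the training noise $e$, while $X_0^\top\hat{\beta}_J-\mathbb{E}X_0^\top\hat{\beta}_J$ has mean zero. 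This is the same argument as in Lemma~\ref{lem:mse-decomposition}. The only ingredient needed is that $\hat{\beta}_J=W(\alpha)^{-1}X^\top Y$ is linear in $Y$, so the centred part is $W(\alpha)^{-1}X^\top e$, which has mean zero.

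The step I expect to be delicate is the bookkeeping of what is conditioned on. The bias and variance terms need expectation over the training errors with $X$ fixed, whereas $M_f$ is taken jointly over $(X_0,Y_0)$ and the training noise. I also note that the displayed statement appears to repeat the bias expression where $V_f$ should be; I would prove the decomposition with $V_f$ as defined in Section~\ref{sec:model-misspecification}.
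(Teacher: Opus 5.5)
Your proposal is correct and follows the paper's proof: expand around $f(X_0,\beta_S)$, cancel the noise term against the $f$-based baseline $\mathbb{E}[(Y_0-f(X_0,\beta_S))^2]$ (exactly how the paper defines $R_f$), keep the cross term as $M_f(J)$, and split the rest into $B_f(J)+V_f(J)$ as in Lemma~\ref{lem:mse-decomposition}; your choice of inner expectation over the training noise with $X_0$ and the training design $X$ held fixed (not conditional on $(X,Y)$, as your first line says) is what makes that cross term vanish for nonlinear $f$, a detail the paper leaves implicit. One caution on your motivation for keeping $M_f(J)$: heteroscedasticity does not make it nonzero, since if $\mathbb{E}[Y_0-f(X_0,\beta_S)\mid X_0]=0$ and the test pair is independent of the training data the tower property gives $M_f(J)=0$, but keeping the term is harmless for the identity.
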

\begin{proof}
The proof follows Lemma \ref{lem:mse-decomposition} except that the cross products do not vanish because of misspecification. We have that the excess risk is
\begin{align*}
R_f(J) 
&=\mathbb{E}\left[ (Y_0-f(X_0,\beta_S)+f(X_0,\beta_S)-X_0^\top\hat{\beta}_{J}) )^2\right] - \mathbb{E}\left[(Y_0 - f(X_0, \beta_S))^2\right].
\end{align*}
Working out the cross products cancels $\mathbb{E}\left[(Y_0 - f(X_0, \beta_S)^2\right]$, without assuming that the model is correct. What remains is $B_f(J)$, $V_f(J)$, similarly obtained as in Lemma \ref{lem:mse-decomposition}, but the cross product that remains is
\begin{align*}
M_f(J) =
2\mathbb{E}\left[ (Y_0-f(X_0,\beta_S))(f(X_0,\beta_S)-X_0^\top\hat{\beta}_J) \right].
\end{align*}
This was to be proved. 
\end{proof}
\begin{proposition}{\rm (Misspecification bound)}\label{prop:misspecification-bound}
Assume the model with additive noise $Y=f(X,\beta_S)+e$, where the conditional mean $\mathbb{E}(Y\mid x)$ is given by the function is $f:\mathbb{R}^p\times \mathbb{R}^p\to \mathbb{R}$ and $e=Y-f(X,\beta_S)$ has mean zero and finite variance $\sigma^2$. Furthermore, we estimate $f(X,\beta_S)$ by $X\beta_J$ with $|J|=p>n$, and assume that the approximation error is bounded in probability by $K$, i.e., $f(X_0,\beta_S)-X_0^\top\hat{\beta}_J=O_p(K)$. Then the  misspecification is bounded by
\begin{align*}
M_f(J) &=  2\mathbb{E}\left[ \left| (Y_0-f(X_0,\beta_S))(f(X_0,\beta_S)-X_0^\top\hat{\beta}_J) \right| \right]
	\le 
	2\sigma O(K)
\end{align*}
\end{proposition}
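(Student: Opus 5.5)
The plan is a single application of the Cauchy--Schwarz inequality to the cross term in Lemma \ref{lem:mse-misspecification}. Write $e_0=Y_0-f(X_0,\beta_S)$ for the test noise and $D_0=f(X_0,\beta_S)-X_0^\top\hat{\beta}_J$ for the approximation error at the test point. Throughout, the expectation is conditional on the training data, so $\hat{\beta}_J$ is held fixed and the randomness sits only in $(X_0,Y_0)$. First I bound the signed cross term by its absolute value, $M_f(J)\le 2\mathbb{E}|e_0D_0|$, which is the quantity displayed in the statement. Then Cauchy--Schwarz gives
\begin{align*}
\mathbb{E}|e_0 D_0| \le \left(\mathbb{E}e_0^2\right)^{1/2}\left(\mathbb{E}D_0^2\right)^{1/2} = \sigma\left(\mathbb{E}D_0^2\right)^{1/2},
\end{align*}
using $\mathbb{E}e_0^2=\sigma^2$. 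This holds because $e_0$ has mean zero and finite variance $\sigma^2$ by assumption, and $(X_0,Y_0)$ has the same distribution as the training points.

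The remaining step is to show $(\mathbb{E}D_0^2)^{1/2}=O(K)$ from the hypothesis $D_0=O_p(K)$. This is the main obstacle. Boundedness in probability does not by itself control a second moment, because a heavy tail can make $\mathbb{E}D_0^2$ infinite even when $D_0=O_p(K)$. I would handle this by reading the hypothesis in its natural $L^2$ sense, namely $\|D_0\|_{L^2}=O(K)$. If I want to stay with $O_p$, I would add a uniform integrability condition on $D_0^2/K^2$, which makes convergence of $D_0/K$ in probability upgrade to boundedness in $L^2$.

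A cheaper route may also be available. For $e_0$ independent of $X_0$ (additive noise), conditioning on $X_0$ gives $\mathbb{E}[e_0D_0]=\mathbb{E}[D_0\,\mathbb{E}(e_0\mid X_0)]=0$. In that case the signed $M_f(J)$ actually vanishes, so any nonnegative bound holds trivially. The absolute-value version still needs the $L^2$ control just described. Once that is in place, combining the two steps gives $M_f(J)\le 2\sigma O(K)$, which completes the argument.
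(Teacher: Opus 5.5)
Your argument is the paper's own: bound the cross term by $2\mathbb{E}|e_0D_0|$ and apply Cauchy--Schwarz to get $2\sigma(\mathbb{E}D_0^2)^{1/2}$. Your caveat is correct and worth keeping, because the paper simply asserts $\mathbb{E}[O_p(K)^2]=O(K^2)$, which fails in general and needs exactly the $L^2$ (or uniform integrability) reading you impose; your remark that the signed $M_f(J)$ vanishes is also right, and it needs only $\mathbb{E}(e_0\mid X_0)=0$ (given integrability of $e_0D_0$), which the stated conditional-mean assumption already supplies, so full independence of $e_0$ and $X_0$ is not required.
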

\begin{proof}
By the Cauchy-Schwarz theorem we can bound the expectation of the product, and so, 
\begin{align*}
	\mathbb{E}\left[ | e(f(X_0,\beta_S)-X_0^\top\hat{\beta}_J) | \right]^2
	\le
	\mathbb{E} (e^2) \mathbb{E}\left[  (f(X_0,\beta_S)-X_0^\top\hat{\beta}_J)^2 \right]
\end{align*}
where we used that $Y_0-f(X_0,\beta_S)=e$. By assumption, $e$ has mean 0 and variance $\sigma^2$, and so $\mathbb{E} (e^2)=\sigma^2$. For the second term, we assumed that $f(X_0,\beta_S)-X_0^\top\hat{\beta}_J=O_p(K)$, and so 
\begin{align*}
	\mathbb{E}\left[ (f(X_0,\beta_S)-X_0^\top\hat{\beta}_J)^2 \right]=\mathbb{E}\left[ O_p(K)^2 \right] =O(K^2)
\end{align*}
Taking the square root of the product gives the required bound. 
\end{proof}
%


\section{High-dimensional space}\label{sec:appendix-high-dimensional-space}\noindent
For some intuition on high-dimensional spaces, the isotropic Gaussian, i.e., with mean 0 and variances of 1 and covariances of 0, is convenient. Let $X$ be a Gaussian random variable of dimension $p$ with mean 0 and isotropic covariance matrix $\mathbb{E}(XX^\top)=\Sigma$ of dimensions $p\times p$. We will consider an inequality that shows that with high probability, for large dimension $p$ most of the draws from an isotropic distribution are at $\sqrt{\smash[b]{p}}$, the radius of the spheroid of the distribution. 
%
%

We can make this clear by considering the probability of an observation falling within a certain bound of the origin \citep[][Chapter 3]{Vershynin:2018}. Suppose we are interested in the length (norm) of the vector $X$ in $p$ dimensions, i.e., we want to know the probability of $||X||_2=\sqrt{\smash[b]{X_1^2 + X^2_2 +\cdots X_p^2}}$ being smaller than $\sqrt{\smash[b] p}$. It is intuitive that the length of $X$ is indeed $\sqrt{\smash[b] p}$ because
\begin{align*}
\mathbb{E} ||X||_2^2 = \mathbb{E} \sum_{j=1}^p X_j^2 = \sum_{j=1}^p \mathbb{E} X_j^2=p.
\end{align*}
So, we should expect many of the values of the square $||X||_2^2$ to be around $\sqrt{\smash[b] p}$. 
Suppose we are considering the probability of $X$ being near 0, i.e., $||X||_2\approx 0$. Then we find that 
\begin{align*}
\mathbb{P}(| ||X||_2 - \sqrt{p}| \le t)\approx \mathbb{P}( \sqrt{p} \le t).
\end{align*}
So, for small values $t$, say 1 or 2, the probability can only be high if $p$ is small.

Another way to consider this is by looking at the volume of a $p$-dimensional ball. The volume of the $p$-dimensional ball in Euclidean space $\mathbb{R}^p$ with radius $r$ is \citep[][Section 1.2]{Giraud:2014}
\begin{align}\label{eq:ball-volume}
\mathbb{B}^p(r) = \frac{\pi^{p/2}}{\Gamma(p/2+1)}r^p \approx \left( \frac{2\pi e r^2}{p} \right)^{p/2} \frac{1}{p\pi}\quad \text{for large $p$},
\end{align}
where $\Gamma$ is the Gamma function $\Gamma(\alpha)=\int_0^\infty t^{\alpha-1}e^{-t}dt$ and $\alpha>0$. Figure \ref{fig:high-dimensions-volume-crust}(a) shows that the volume of the ball is close to 0 already when the dimension is about 20. Moreover, Figure \ref{fig:high-dimensions-volume-crust}(b) shows that most of the points from the Gaussian isotropic distribution will be near the surface of the $p$-dimensional ball $\mathbb{B}^p(\sqrt{\smash[b]{p}})$. The figure shows the fraction of the points outside an inner $p$-dimensional ball of radius $0.95 r$, so very close to the ball of radius $r$. The fraction increases to 1 exponentially fast \citep{Giraud:2014}. Already at $p=30$ the fraction in the last 5\% of the ball is nearly 80\%. This shows that most of the points will be near the surface of the $p$-dimensional ball.
\begin{figure}\centering
\begin{tabular}{c   c}
\pgfimage[width=.5\textwidth]{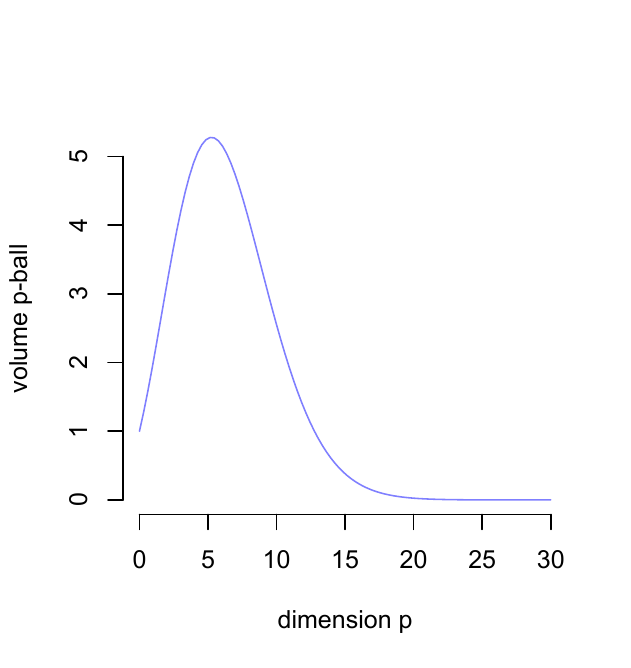}		&\pgfimage[width=.5\textwidth]{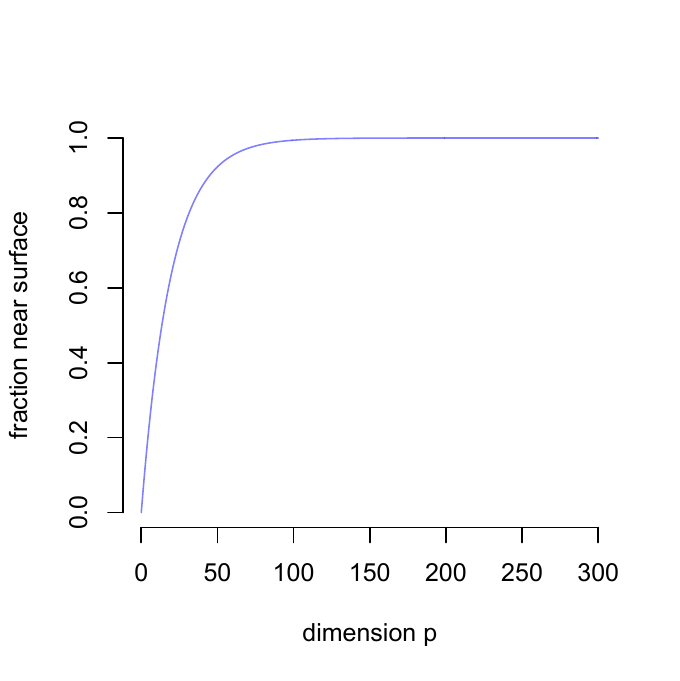}\\[-1em]
(a)	&(b)	
\end{tabular}
\caption{In (a) we see that, indeed, the volume of a $p$-dimensional ball with radius 1 in high dimensions gets smaller and smaller, already are $p=20$ is the volume close to 0. In (b) is the consequence of this, where we see that the fraction of observations from an isotropic distribution near the surface increases exponentially to 1. }
\label{fig:high-dimensions-volume-crust}
\end{figure}
%

\section{Geometry and metric entropy of Gaussian process (Section \ref{sec:model-selection})}\label{sec:appendix-geometry-metric-entropy}\noindent
Here we first prove that the the ratio of signal and noise volumes can, for linear models in the high-dimensional setting, be written in terms of the ridge estimator. This also implies that the ridge parameter $\alpha$ can be interpreted as the inverse of the signal to noise ratio.
%
%
%
\begin{lemma}{\rm (Volume linear model)}\label{lem:volume-linear-model}
We assume the linear model $X\beta$ where $X$ is an $n\times p$ matrix and $\beta$ is a $p$-dimensional vector, such that $Y=X\beta +e$ and $e$ is Gaussian with mean 0 and variance $\sigma^2_e I$. 
The volume ratio of the linear model with ridge regression with respect to the noise $e$ is 
\begin{align*}
\frac{\text{vol}(X\beta)}{\text{vol}(e)} = \sqrt{\det(X^\top X+\alpha I_p)},
\end{align*}
where $\alpha^{-1}=||\beta||_2^2/\sigma^2_e$.
\end{lemma}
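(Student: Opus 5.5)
The plan is to interpret ``volume'' as the factor by which the linear map $\beta\mapsto X\beta$ scales $p$-dimensional volume, measured in units of the noise. For a full-column-rank $n\times p$ matrix $A$, the image of a unit cube in $\mathbb{R}^p$ is a parallelotope in $\mathbb{R}^n$. Its $p$-dimensional volume is the Gram determinant $\sqrt{\det(A^\top A)}$; this is the same change-of-variables fact that underlies $\int_B\sqrt{\det F}\,d\beta$ in (\ref{eq:mdl-penalty}). When $p>n$ this gives $\det(X^\top X)=0$ (Section \ref{sec:appendix-estimation-p-n}), so the image collapses. The ridge estimator is exactly what repairs this, and the proof will make that precise.

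First I will use the standard data-augmentation representation of ridge regression. Define
\begin{align*}
\tilde{X}=\begin{pmatrix} X\\ \sqrt{\alpha}\, I_p\end{pmatrix}, \qquad \tilde{Y}=\begin{pmatrix} Y\\ 0\end{pmatrix}.
\end{align*}
Then $\|\tilde{Y}-\tilde{X}\beta\|_2^2=\|Y-X\beta\|_2^2+\alpha\|\beta\|_2^2$, whose minimiser is (\ref{eq:ridge-estimator}), and $\tilde{X}^\top\tilde{X}=X^\top X+\alpha I_p$. Since $\alpha>0$, $\tilde{X}$ has full column rank $p$, which is the sense in which ridge ``adds independent dimensions'' (Introduction). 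Applying the Gram-determinant formula to $\tilde{X}$ then gives the volume of the image of a unit region of parameter space as $\sqrt{\det(X^\top X+\alpha I_p)}$.

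Second, I will normalise by the noise. The noise $e\sim N(0,\sigma_e^2 I)$ has isotropic scale $\sigma_e$ in every direction. So I measure both the signal $X\beta$ and the parameter region in units of $\sigma_e$, rescaling $X\mapsto X/\sigma_e$ in line with the Fisher information $F=X^\top X/\sigma^2$. The parameter region is taken to be the ball of radius $\|\beta\|_2$, matching the ridge constraint $\|\beta\|_2^2\le c$ being active at the solution. The appropriate penalty weight then comes from the Lagrangian or Bayesian reading of ridge given in Section \ref{sec:appendix-estimation-p-n}: a prior $\beta\sim N(0,\tau^2 I)$ with $\tau^2=\|\beta\|_2^2$ gives $\alpha=\sigma_e^2/\tau^2$, i.e.\ $\alpha^{-1}=\|\beta\|_2^2/\sigma_e^2$. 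With this identification the augmented rows $\sqrt{\alpha}I_p$ are exactly the prior ``pseudo-observations'' expressed in noise units. Combining the two steps, the ratio $\mathrm{vol}(X\beta)/\mathrm{vol}(e)$ equals $\sqrt{\det(X^\top X+\alpha I_p)}$, possibly after absorbing the common factor $\sigma_e^{p}$, which cancels in the ratio.

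The main obstacle is making ``$\mathrm{vol}(X\beta)/\mathrm{vol}(e)$'' precise enough that the identification $\alpha^{-1}=\|\beta\|_2^2/\sigma_e^2$ follows from the argument rather than being imposed. I would first try to define the ratio explicitly as the Jacobian of the augmented map $\beta\mapsto\tilde{X}\beta/\sigma_e$, with the parameter region scaled by $\|\beta\|_2/\sigma_e$. I would then check that keeping the scale factors consistent forces exactly the stated value of $\alpha$. If that fails, the fallback is to state the identification as the defining choice via the Bayesian or signal-to-noise interpretation and prove only the determinant identity rigorously.
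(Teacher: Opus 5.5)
You have a genuine gap, and it sits where you suspect: you never define or compute $\text{vol}(e)$. What your argument actually produces is only a numerator, the $p$-dimensional Gram volume $\sqrt{\det(X^\top X+\alpha I_p)}$ of the augmented design. That volume lives in $\mathbb{R}^{n+p}$, while the noise lives in $\mathbb{R}^n$. The identification $\alpha^{-1}=\|\beta\|_2^2/\sigma_e^2$ is imported from the Bayesian reading of ridge rather than produced by comparing signal with noise.

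The missing idea, which is how the paper obtains $\alpha$, is to compare two $n$-dimensional volumes in data space. The first is the noise ball $\mathbb{B}^n(\sqrt{n}\,\sigma_e)$, near whose boundary $e$ concentrates. The second is the signal volume $\sqrt{\det(XX^\top)}\,\mathbb{B}^n(\sqrt{n}\|\beta\|_2)$; for $p>n$ and $X$ of rank $n$, this is the $n$-volume of the image under $X$ of a parameter ball of radius $\sqrt{n}\|\beta\|_2$. Since $\mathbb{B}^n(r)=\mathbb{B}^n(1)r^n$, the ratio is $\sqrt{\det(XX^\top)}\,(\|\beta\|_2^2/\sigma_e^2)^{n/2}=\sqrt{\det(\alpha^{-1}XX^\top)}$. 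The signal-to-noise ratio appears because the $n$-th power of the radius ratio is absorbed into the $n\times n$ determinant. Only after this does the paper regularise, replacing $\alpha^{-1}XX^\top$ by $\alpha^{-1}XX^\top+I_n$. That step is motivated by the minimum-norm limit and identified with your augmented matrix $(X^\top,\sqrt{\alpha}I)^\top$. The paper then passes to a $p\times p$ determinant with the Weinstein--Aronszajn identity. Your augmentation makes that ``$+I$'' step more explicit than the paper's substitution does, but it does not supply the volume comparison that fixes $\alpha$.

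Also, ``absorbing the common factor $\sigma_e^p$'' is not legitimate, and no consistent bookkeeping will give exactly the stated right-hand side. Grant the identification: it is precisely the choice that gives the pseudo-observations unit weight once parameters are measured in units of $\|\beta\|_2$ and data in units of $\sigma_e$. Divide $\|Y-Xb\|_2^2+\alpha\|b\|_2^2$ by $\sigma_e^2$ and write $b=\|\beta\|_2u$; this gives $\|Y/\sigma_e-\alpha^{-1/2}Xu\|_2^2+\|u\|_2^2$. So the properly normalised augmented design is $(\alpha^{-1/2}X^\top,I_p)^\top$, with Gram determinant $\det(I_p+\alpha^{-1}X^\top X)=\alpha^{-p}\det(X^\top X+\alpha I_p)$. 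The leftover factor $(\|\beta\|_2/\sigma_e)^p=\alpha^{-p/2}$ is exactly what your parameter ball of radius $\|\beta\|_2$ contributes in noise units, and it does not cancel.

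This is the same quantity the paper has before its last step, since $\det(I_n+\alpha^{-1}XX^\top)=\det(I_p+\alpha^{-1}X^\top X)$. The paper's final equality with $\sqrt{\det(X^\top X+\alpha I_p)}$ drops the same $\alpha^{-p/2}$. A scaling check settles it. Under $X\mapsto cX$, $\beta\mapsto\beta/c$, which leaves $X\beta$ and $e$ unchanged, one has $\alpha\mapsto c^2\alpha$. Then $\sqrt{\det(I_p+\alpha^{-1}X^\top X)}$ is invariant, like the paper's $\sqrt{\det(XX^\top)}(\|\beta\|_2/\sigma_e)^n$, while $\sqrt{\det(X^\top X+\alpha I_p)}$ scales by $c^p$. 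So your planned consistency check cannot force the stated formula. Prove the version $\sqrt{\det(I_p+\alpha^{-1}X^\top X)}$, or state the lemma up to the factor $\alpha^{-p/2}$.
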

\begin{proof}
Let $\text{vol}(\beta)=\mathbb{B}^n(\sqrt{n}||\beta||_2)$ be the $n$-dimensional ball with radius $\sqrt{n}||\beta||_2$. 
Compared to the volume of the noise $\mathbb{B}(\sqrt{n\sigma^2})$, we obtain the ratio
\begin{align*}
\frac{\text{vol}(X\beta)}{\text{vol}(e) }= \sqrt{\det(XX^\top)} \frac{\mathbb{B}^n(\sqrt{n}||\beta||_2)} {\mathbb{B}^n(\sqrt{\smash[b]{n\sigma^2_e}})}.
\end{align*}
An $n$-dimensional ball $\mathbb{B}^n(r)$ with radius $r$ can be written in terms of the $n$-dimensional ball with radius 1: $\mathbb{B}^n(1)r^n$. Hence we find for the ratio of volumes $\text{vol}(X\beta)/\text{vol}(e)$
\begin{align*}
\sqrt{\det(XX^\top)}\left(\frac{||\beta||_2^2}{\sigma^2_e}\right)^{n/2}
=
\sqrt{\det(\alpha^{-1} XX^\top)},
\end{align*}
where $\alpha^{-1}=||\beta||_2^2/\sigma^2_e$ is the signal-to-noise-ratio (SNR).

It turns out that when $p>n$ we obtain $X^\top(XX^\top)Y$ as the so-called minimum norm estimate $\hat{\beta}^{MN}$ of $\beta$ \citep{BenIsraelGreville74,Bartlett:2021}. The minimum-norm solution is related to the ridge estimator as follows
\begin{align*}
\lim_{\alpha \to 0^+} (X^\top X + \alpha I)^{-1}X^\top = X^\top(XX^\top)^{-1}.
\end{align*}
Intuitively, the minimum-norm solution is obtained for the smallest $\alpha$ that makes the augmented inverse of $X^\top X$ possible. It is therefore reasonable to substitute for $X X^\top$ the ridge version $X^\top X+\alpha I$, where $\alpha$ is small. Hence, we obtain the volume ratio $\text{vol}((X^\top,\sqrt{\alpha}I)^\top\beta)/\text{vol}(e)$ with the ridge version \citep{Cheema:2020}
\begin{align*}
\sqrt{\det(\alpha^{-1} XX^\top+I_n)}=\sqrt{\det(X^\top X+\alpha I_p)},
\end{align*}
by the Weinstein–Aronszajn identity. 
\end{proof}
Because $\alpha=1/\text{SNR}=\sigma^2_e/||\beta||_2^2$, we obtain small $\alpha$, close to the minimum-norm estimate, if we have high signal $||\beta||_2^2$.

Clearly, the tuning parameter $\alpha$ can be expressed as the inverse of the signal-to-noise ratio (SNR). The SNR is defined as the ratio of the Euclidean length of the parameters ($||\beta||_2^2$, see Appendix \ref{sec:appendix-estimation-p-n}) and the noise variance $\sigma_e^2$; then $\alpha=1/\text{SNR}$ and $\text{SNR}= ||\beta||_2^2/\sigma^2_e$. The Euclidean length is a representation of the volume for the variables $\beta$. So, if we restrict $||\beta||_2^2$ to be no larger than $c$ (ridge regression), then we are restricting the volume of the possible values that the parameters in $\beta$ can take. 
This implies that when we increase the volume ($||\beta||_2^2$) we reduce the tuning parameter $\alpha$, and hence penalise the regression less. This can be interpreted as follows: imposing a higher $\alpha$ in ridge regression will lead to decreased variance (this was shown in Section \ref{sec:mse-decomposition}). This also implies that the peak at the interpolation point in Figure \ref{fig:ridge-mse-var-bias} is an artifact of applying a suboptimal tuning parameter (see Figure \ref{fig:mse-var-bias}, where in the middle row the peak has disappeared upon careful selection of $\alpha$). 
Summarising, we can say the following.
\begin{itemize}
\item[($a$)] We can increase $\alpha$ directly, reducing the test variance, and hence obtaining reasonable test MSE. This leads indirectly to a lower SNR (because $\alpha=1/\text{SNR}$); or
\item[($b$)] we can constrain the size of $||\beta ||_2$ in the ridge estimator, therefore, decreasing the model complexity (volume) of the model. This will reduce SNR and hence, increase $\alpha$, leading to a decrease in the test variance. We do this by constraining the ridge estimator with small $c$ such that $||\beta||_2\le c$. 
\end{itemize}
In both cases ($a$) and ($b$) we are either directly or indirectly constraining the total signal $||\beta||_2$. In model selection this can be done by either increasing $\alpha$, or equivalently, decreasing $c$ in the ridge constraint $||\beta||_2\le c$, or by incorporating the complexity (volume) of the model (the $n$-dimensional ball $\mathbb{B}^n(||\beta||_2)$, see Appendix \ref{sec:appendix-geometry-metric-entropy}). 
\section{Akaike information criterion}\label{sec:appendix-akaike-information-criterion}\noindent
For the derivation of the Akaike information criterion (AIC), we follow \citet{Claeskens:2008}. The AIC can be derived from the Kullbeck-Leibler divergence (KL). The KL is a way to quantify the overlap between  two distributions. The KL is 0 if the distributions are the same (almost surely) and is $>0$ if they are not the same. Hence, the AIC aims to minimise the KL across the different models. 

Let $f$ and $g$ be two densities, where we designate $g$ as the true underlying distribution and we use $f$ as an approximation. In practice we estimate a parameter $\hat{\beta}$ that we plug-in $f$, denoted by $f(\cdot,\hat{\beta})$. The KL is for $f$ and $g$ defined as \citep{KullbackLeibler51,Cover:2006}
\begin{align*}
\text{KL}(f,g) = \mathbb{E}_{Y\mid x} \log g(Y\mid x) - \mathbb{E}_{Y\mid x} \log f(Y\mid x,\hat{\beta}),
\end{align*}
where $\mathbb{E}_{Y\mid x}$ indicates the expectation over $Y$ conditional on $x$ (the values of the predictors in the nodewise GGM). Across different models the term $\mathbb{E}_{Y\mid x} \log g(Y\mid x)$ is fixed and so we concentrate on the second term. So, maximising the second term of the KL equals minimising the KL itself. The expectation of this second term is 
\begin{align*}
L=\mathbb{E}_{X} \left( \mathbb{E}_{Y\mid x} \log f(Y\mid x,\hat{\beta}) \right).
\end{align*}
The AIC tries to estimate this expectation and then select the model with the highest value (and hence the minimal value of KL). This expectation is estimated by the log-likelihood $\tfrac{1}{n}L(\hat{\beta})=\frac{1}{n}\sum_{i=1}^n f(Y_i\mid x_i,\hat{\beta})$.
And it can then be shown that this estimate is biased and needs to be corrected, since
\begin{align*}
\mathbb{E} (\tfrac{1}{n}L(\hat{\beta}) - L) \approx \frac{p}{n}\quad \text{and, hence}\quad \frac{1}{n}(-2L(\hat{\beta}) + 2p).
\end{align*}
Discarding the $1/n$ leads to the standard AIC. 

Here, for the GGM we used the Gaussian distribution for $f$. Then \citep{Seber:2012}
\begin{align*}
\sum_{i=1}^n\log f(Y_i\mid x_i,\beta) = \sum_{i=1}^n-\frac{1}{2\sigma^2}(Y_i-x_i^\top \beta)^2 - \frac{n}{2}\log \sigma^2 -\frac{n}{2}\log 2\pi.
\end{align*}
Plugging in our ridge estimate $\hat{\beta}_J$ for model $J$ (see (\ref{eq:ridge-estimator})), multiplying by -2 and ignoring constants, we obtain the AIC
\begin{align*}
\text{AIC}(J) = n\log \hat{\sigma}_{J} + 2 p_{J},
\end{align*}
where $p_{J}$ is the number of parameters for model $J$ and 
\begin{align*}
\hat{\sigma}_{J}^2=\frac{1}{n}\sum_{i=1}^n (y_i - x_i^\top \hat{\beta}_{J})^2
\end{align*}
is plugged in for $\sigma^2$. The standard AIC has a fixed (independent of the data) value of the ridge parameter $\alpha$. The AIC-CV has a ridge parameter $\alpha$ that has been obtained with $k$-fold cross validation.

\section{Minimum description length: optimised}\label{sec:appendix-minimum-description-length-optimised}\noindent
The minimum description length (MDL) principle has been extended to the high-dimensional setting of $n<p$ in another way than in Section \ref{sec:minimum-description-length}. The principle of approximating the code length for the data given the model and the code length for the model is the same, but in order to account for the high-dimensions (such that $d/n\to \gamma <\infty$), the effective degrees of freedom are used in combination with an optimal $\alpha$ for the effective degrees of freedom \citep{Dwivedi:2020}. The MDL is then obtained by minimising a function for $\alpha$ that gives the smallest MDL value. The function to be optimised over $\alpha$ is 
\begin{align*}
\text{MDL-opt} = \frac{n}{2}\log \hat{\sigma}^2 + \frac{1}{2\hat{\sigma}^2}||\hat{\beta}||_2^2 + \sum_{j=1}^{\min\{n,d\}} \log\left( 1 + \frac{\lambda_j}{\alpha} \right),
\end{align*}
where $\lambda_j$ are the eigenvalues of $X^\top X$ and $\hat{\beta}$ is the ridge estimator with value $\alpha$. The second term is included here because of the Bayesian interpretation of MDL in \citet{Dwivedi:2020}, where this term is entered as a prior for $\beta$. This version of MDL is referred to as MDL-opt.

\section{MDL leads to underfit in high-dimensional settings}\label{sec:appendix-mdl-underfit}\noindent
{\bf Proposition} \ref{prop:mdl-underfit}
Let $S$ denote the true support for a (possibly nonlinear) data generating process of $Y_i$ for $i=1,\ldots,n$ with $n$ fixed, and let $J$ represent the set of predictors such that $S\subset J$, where $|S|=d$ and $|J|=p$. We test linear models $X\beta_J$ for any $J$ (including $S$) by MDL as given in (\ref{eq:mdl-full}) with (\ref{eq:mdl-penalty}). Let $p$ increase and $p > d$. If $\mathbb{E}\hat{\sigma}^2_J>\gamma$, for some $\gamma>0$, then with probability at least $1-\tfrac{d}{p}\exp(-(p-d))$ it holds that $\hat{S}\subseteq S$; equivalently, the probability of false positives goes to 0. \\
\begin{proof}
We first note that 
\begin{align*}
\mathbb{P}(\hat{S}\subseteq S) = \mathbb{P}(\min_J\text{MDL}(J)\le \text{MDL}(S): |J|\ge |S|)
\end{align*}
And using the complement rule and exponentiation we can rewrite this as
\begin{align*}
1- \mathbb{P}(\min_J \exp[\text{MDL}(S) - \text{MDL}(J)] <1: |J|\ge |S|) 
\end{align*}
Then we apply Markov's inequality to show that for $d=|S|\le |J|=p$
\begin{align*}
\mathbb{P}( \exp[\text{MDL}(S)- \text{MDL}(J)] < 1) 
\le 
\mathbb{E}\left( \exp[\text{MDL}(S)- \text{MDL}(J)]  \right)
\end{align*}
We start with the first part of the MDL, the log of the empirical variance, defined as (using maximum likelihood) $\hat{\sigma}_J^2=\tfrac{1}{n}||Y-X^\top\hat{\beta}_J||^2$. For models with $p= |J|\ge |S|$, we have that the in-sample variance is $\hat{\sigma}^2_J\le \hat{\sigma}^2_S$ $\mathbb{P}$-a.e. \citep{BilodeauBrenner99}, and so $\mathbb{E}\hat{\sigma}^2_J\le \mathbb{E}\hat{\sigma}^2_S$ by monotonicity. Therefore, and because both terms are positive, using Jensen's inequality for concave functions such that $\mathbb{E}(\log X) \le \log \mathbb{E}(X)$, we obtain that 
\begin{align*}
0 \le \mathbb{E}\log \hat{\sigma}^2_S - \mathbb{E}\log \hat{\sigma}^2_J
\le
\log \mathbb{E}\hat{\sigma}^2_S - \log \mathbb{E}\hat{\sigma}^2_J
=
\log \frac{\mathbb{E}\hat{\sigma}^2_S} {\mathbb{E}\hat{\sigma}^2_J}
\end{align*}
The expectation $\mathbb{E}\tfrac{1}{n}||Y-X^\top\hat{\beta}_J||^2$ under the true model is $\sigma^2 + \mathbb{E}\tfrac{1}{n}||f(X,\beta_S)-X^\top\hat{\beta}_J||^2$, and similarly for the expectation $\sigma^2 + \mathbb{E}\tfrac{1}{n}||f(X,\beta_S)-X^\top\hat{\beta}_J||^2$, which are bounded by the assumption on misspecification in Lemma \ref{lem:mse-misspecification}. Using monotonicity above and the assumption that $\mathbb{E}\hat{\sigma}^2_J>0$ is bounded away from 0, the ratio above is positive and bounded by $K/\gamma$, with $|f(X,\beta_S)-X^\top\hat{\beta}_S|<K$. 

Then we move on to the next part $\text{tr}(S_J)\log n$ of MDL. In the setting of linear regression, we have that the sets of predictors in the design matrix $X=(x_1,x_2,\ldots,x_p)$ are nested, so that $S\subseteq J$ for $|J|\ge |S|$. For a fixed $\alpha>0$, $n>|S|=d$, and  $|J|=p>n$, we obtain 
\begin{align*}
\text{tr}(S_S)=\sum_{j=1}^{d} \frac{\lambda_j}{\lambda_j+\alpha} \le \sum_{j=1}^{n} \frac{\lambda_j}{\lambda_j+\alpha} =\text{tr}(S_J)\le \sum_{j=1}^{d} \frac{\lambda_j}{\lambda_j+\alpha}  + (n-d)\frac{1}{\alpha}
\end{align*}
since $\lambda_j\ge 0$. Then, for the difference between these two MDL parts we obtain
\begin{align*}
(\text{tr}(S_S)-\text{tr}(S_J))\log n \le -(n-d)\frac{1}{\alpha} \log n
\end{align*}

For the last part of MDL, we use a result by \citet[][Theorem 3]{Cheema:2020} 
\begin{align*}
\mathbb{E} \left(  \log \det(X^\top X+\alpha I) + \log \mathbb{B}^p(||\beta||_2)/\sigma^p  \right) \le \log d + \frac{n}{2}\log(\alpha+1) + \log\mathbb{B}^p(1)
\end{align*}
Because $\mathbb{B}^p(1)\le \mathbb{B}^d(1)$, where $d=|S|\le |J|=p$, we obtain that 
\begin{align*}
\log \frac{\mathbb{B}^d(1)}{\mathbb{B}^p(1)} = \log \pi^{-(p-d)/2}\exp(-(p-d)/2) +O(d/p)
\end{align*}
where we used Stirling's approximation for factorials and the approximation of the Euclidean ball $\mathbb{B}^p(1)$ given in (\ref{eq:ball-volume}). 

Putting the three elements together gives the upper bound on the expectation
\begin{align*}
&\log \frac{\mathbb{E}\hat{\sigma}^2_S} {\mathbb{E}\hat{\sigma}^2_J}  + \log n^{-(n-d)/\alpha} + \log\frac{d}{p} + \log \pi^{-(p-d)/2}\exp(-(p-d)/2)
\end{align*}
Since the first term is bounded by $K/\gamma$, we observe that the above is dominated by the last term, which goes to $0$ as $p$ increases. In particular, we find that 
\begin{align*}
\pi^{-(p-d)/2}\exp(-(p-d)/2) \le \frac{d}{p}\exp(-(p-d))
\end{align*}
Consequently, we obtain that 
\begin{align*}
\mathbb{E}\left( \exp[\text{MDL}(S)- \text{MDL}(J)]  \right) \to 0
\end{align*}
And, therefore, as $p$ increases
\begin{align*}
\mathbb{P}(\hat{S}\subseteq S) = \mathbb{P}(\min_J\text{MDL}(J)\le \text{MDL}(S): |J|\ge |S|) \le 1 - \frac{d}{p}\exp(-(p-d))
\end{align*}
as was to be shown.
\end{proof}

\section{Simulation details}\label{sec:appendix-simulation-details}\noindent
In the case where the true model is linear we have $f(X)=X\beta$. In the misspecified case we used instead of the linear model the sigmoid function $f(X)= 1/(1+\exp(X\beta))$. Again we keep $d=5$ non-zero coefficients fixed. The coefficients $\beta$ were normalised so that the $L_2$ norm was 1, i.e., $||\beta||_2=1$. 
The signal-to-noise ratio (SNR) is defined as $||\beta||_2^2/\sigma^2$, the ratio of the variance of the signal (coefficients) and the noise variance. Since we fix $||\beta||_2^2=1$, the variance of the noise determines the SNR, set to 2, so that the noise variance is 0.5. The number of generated observations is fixed at $n=40$.

For estimation we selected a fraction of 0.80 for training (estimation) of parameters and the remaining 0.20 for testing. We fix $n=40$ observations in total, and so obtain $n_\text{train}=32$ and $n_\text{test}=8$. We fix the dimension of the true model at $d=5$ and vary the number of parameters $p$ from 3 to 320. Hence we obtain the ratio $p/n$ for training data from 0.09 to 10, where the last setting means we have ten parameters per observation. With fixed $d=5$ we have that the correct ratio $p/n$ is at $5/32= 0.15625$. 

The ridge estimate is either obtained with parameter $\alpha=0.5$, or the ridge estimate is obtained with $\alpha$ estimated with 10-fold cross-validation, with the $\alpha$ with the smallest MSE. The Lasso is obtained with parameter $\alpha$ from 10-fold cross-validation. Estimation was performed with the \texttt{R} package \texttt{glmnet}.

\begin{figure}
\begin{tabular}{@{\hspace{-1em}} c @{\hspace{1em}} c @{\hspace{-1em}} c}
\textsf{}		&\textsf{test MSE}			&\textsf{bias-variance}\\[-2.5em]
\raisebox{8em}{\textsf{ridge}}		&\pgfimage[width=.5\textwidth]{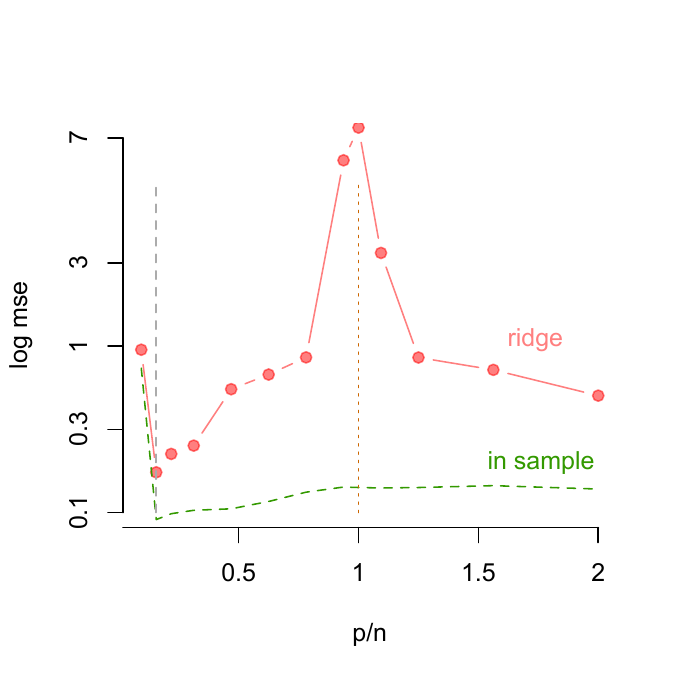}		&\pgfimage[width=.47\textwidth]{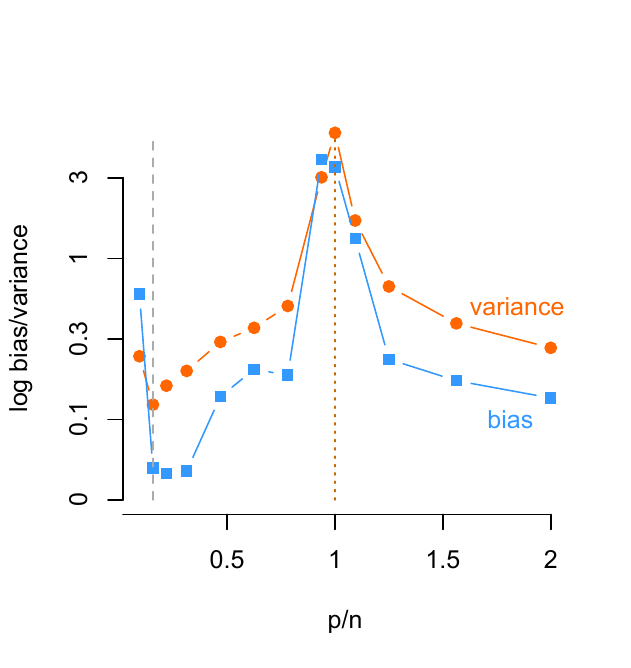}\\[-3em]
\raisebox{8em}{\textsf{ridge cv}}		&\pgfimage[width=.5\textwidth]{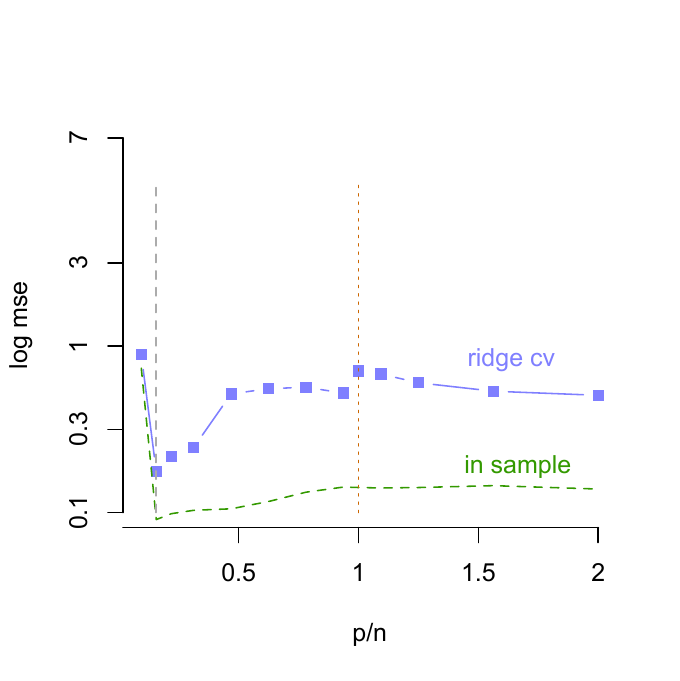}		&\pgfimage[width=.5\textwidth]{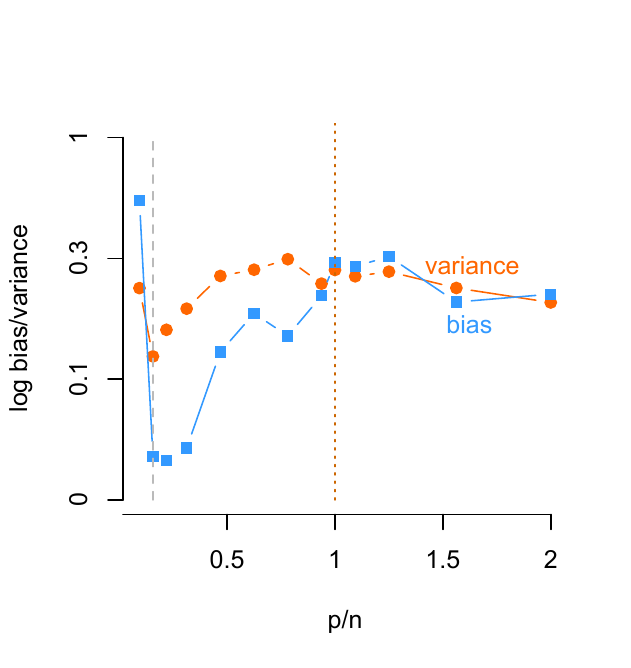}\\[-3em]
\raisebox{8em}{\textsf{lasso cv}}		&\pgfimage[width=.5\textwidth]{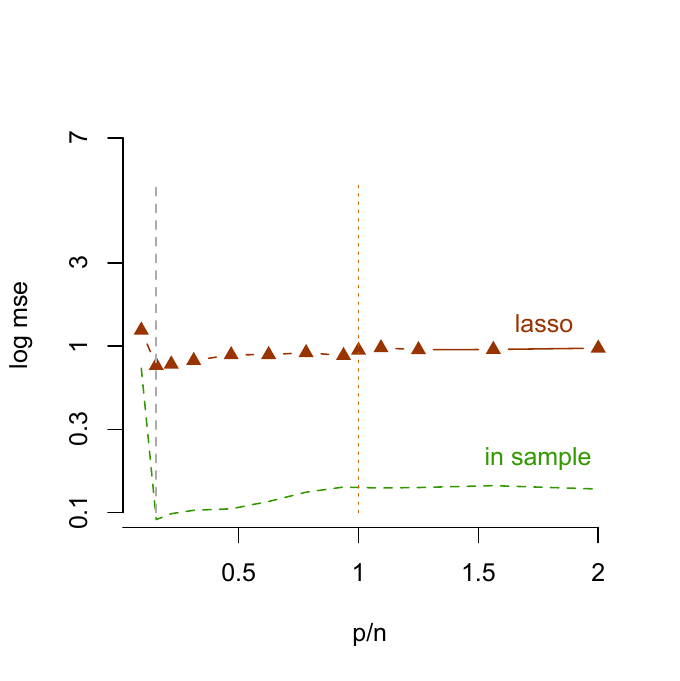}		&\pgfimage[width=.5\textwidth]{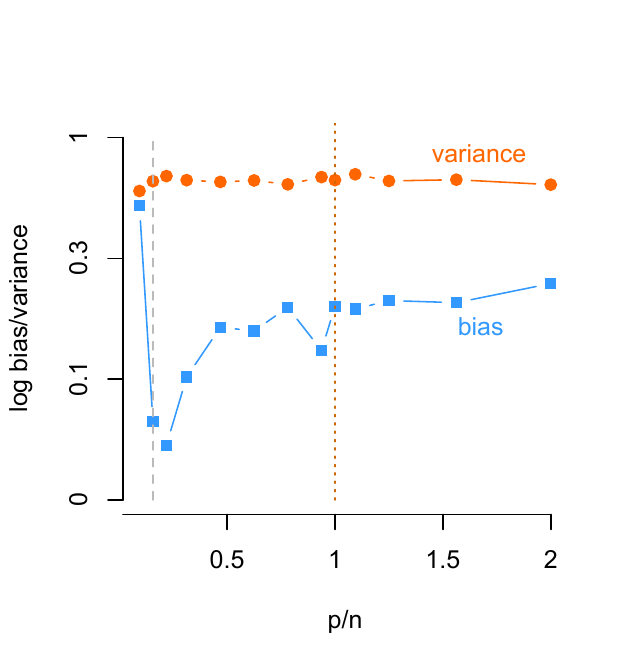}\\
\end{tabular}
\caption{Test MSE (left column) and its decomposition in squared bias and variance. The top row is obtained with a fixed value $\alpha=0.0001$, the middle row is obtained with an $\alpha$ estimated using $10$-fold cross-validation, and the bottom row is obtained with the lasso, where also the penalty parameter is obtained with 10-fold cross-validation. }
\label{fig:mse-var-bias}
\end{figure}
%


\section{Residuals in training and test samples}\label{sec:app-residuals-training-test}\noindent
The interpolation point at $p=n$ is the point where it is possible that each datapoint is captured by the model in the training set, also in the linear case. We see this in Figure \ref{fig:residuals}(a), where the residuals of the training (estimation) data as a function of the predicted values $\hat{Y}$ are plotted. Here, the true number of non-zero coefficients is 5 but in the overparameterised case we use $p=64$. As seen, the training residuals are 0 and so $\hat{Y}_{S}=\sum_{j\in S} X_j\beta$ is the same as $\hat{Y}_J = \sum_{j\in J}X_j \hat{\beta}_J$, where $\hat{\beta}_J$ is the ridge estimate for model $J$ and model $J$ contains all 64 predictors. Because the observations are predicted exactly, it was assumed that the variance of the residuals on a test set would be large. Figure \ref{fig:residuals}(b) shows that this is not the case. There is some variance in the residuals of the test set but not too much. The noise that ends up in the predictors is ``hidden'' in the unimportant directions of the predictors because of the ridge parameter \citep{Bartlett:2020}. 
\begin{figure}
\begin{tabular}{@{\hspace{-1em}} c @{\hspace{1em}} c @{\hspace{-1em}} c}
&\pgfimage[width=.5\textwidth]{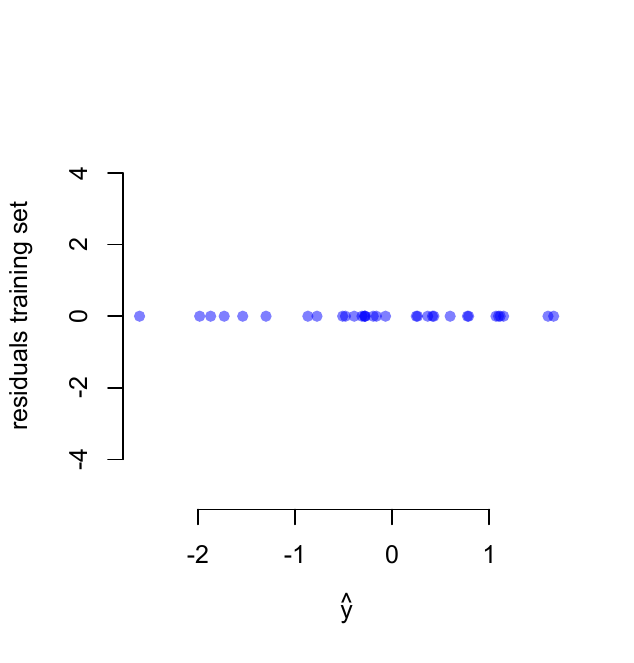}		&\pgfimage[width=.5\textwidth]{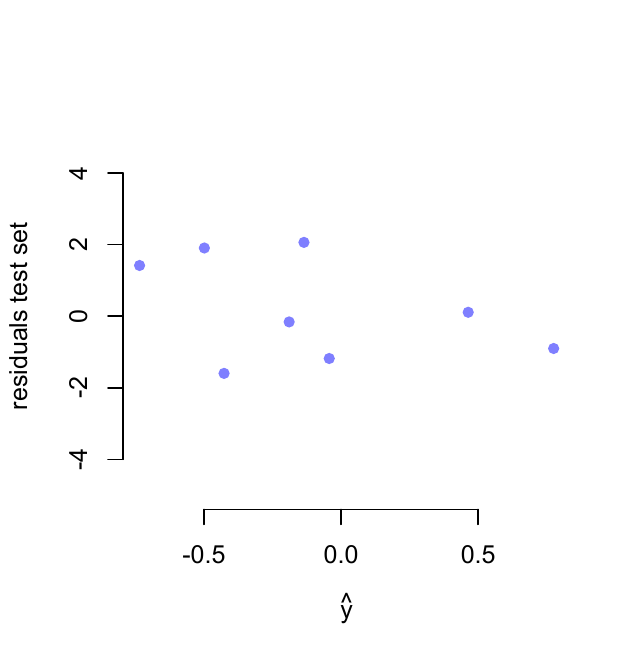}\\[-1em]
	&(a)		&(b)
\end{tabular}
\caption{Residuals of the training set (a), showing that the model with 64 predictors (while correct is 5) is interpolating the data, fitting exactly each point. The residuals of the test set (b) remain relatively small, indicating that the variance need not explode beyond the interpolation point.}
\label{fig:residuals}
\end{figure}
\end{document}